\def\eqref#1{equation~\ref{#1}}
\def\1{\bm{1}}
\DeclareMathAlphabet{\mathsfit}{\encodingdefault}{\sfdefault}{m}{sl}
\SetMathAlphabet{\mathsfit}{bold}{\encodingdefault}{\sfdefault}{bx}{n}
\theoremstyle{definition}
\newtheorem{corollary}{Corollary}
\newtheorem{definition}{Definition}
\newtheorem{assumption}{Assumption}
\newtheorem{theorem}{Theorem}
\newtheorem{lemma}{Lemma}
\newtheorem{property}{Property}
\crefname{property}{Property}{Properties}
\title{Towards Monotonic Improvement in\\ In-Context Reinforcement Learning}
\author{Wenhao Zhang, Shao Zhang, Xihuai Wang, Yang Li, Ying Wen\thanks{Correspondence author.} \\
Shanghai Jiao Tong University\\
\texttt{\{wenhao\_zhang, ying.wen\}@sjtu.edu.cn} \\
}
\begin{document}

\maketitle

\begin{abstract}

In-Context Reinforcement Learning (ICRL) has emerged as a promising paradigm for developing agents that can rapidly adapt to new tasks by leveraging past experiences as context, without updating their parameters. Recent approaches train large sequence models on monotonic policy improvement data from online RL, aiming to a continue improved testing time performance. However, our experimental analysis reveals a critical flaw: these models cannot show a continue improvement like the training data during testing time. Theoretically, we identify this phenomenon as \textit{Contextual Ambiguity}, where the model's own stochastic actions can generate an interaction history that misleadingly resembles that of a sub-optimal policy from the training data, initiating a vicious cycle of poor action selection. To resolve the Contextual Ambiguity, we introduce \textit{Context Value} into training phase and propose \textbf{Context Value Informed ICRL} (CV-ICRL). CV-ICRL use Context Value as an explicit signal representing the ideal performance theoretically achievable by a policy given the current context. As the context expands, Context Value could include more task-relevant information, and therefore the ideal performance should be non-decreasing. We prove that the Context Value tightens the lower bound on the performance gap relative to an ideal, monotonically improving policy. We fruther propose two methods for estimating Context Value at both training and testing time. Experiments conducted on the Dark Room and Minigrid testbeds demonstrate that CV-ICRL effectively mitigates performance degradation and improves overall ICRL abilities across various tasks and environments. The source code and data of this paper are available at 
\url{https://github.com/Bluixe/towards_monotonic_improvement}

\end{abstract}

\section{Introduction}

As reinforcement learning (RL) algorithms are increasingly deployed in diverse and dynamic environments, there is a growing demand for methods that can generalize across tasks and adapt efficiently to novel situations, a challenge that current RL algorithms still struggle to address~\citep{finn2017model, cobbe2019quantifying}. A promising direction toward this goal is In-Context Reinforcement Learning (ICRL), where agents adapt to unseen tasks purely through interaction with the environment without updating model parameters, only by leveraging past experiences provided as context \citep{brown2020language, chan2022data}.
Current advanced ICRL methods leverage offline datasets containing trajectories of increasing policy quality, which aims to a continue performance improvement in testing time. 
For instance, Algorithm Distillation (AD) and its subsequent works, including our approach, are trained using continuously enhanced trajectories generated from online RL algorithms and demonstrations~\citep{AD, IDT}.



However, a critical gap emerges between this idealized continuously improved training data and test-time performance. 
Through case study, we found that these ICRL methods suffer from severe performance regression during inference, which cannot achieve monotonic improvement as training dataset shows (\Cref{fig:main,fig:case_study}).
Theoretically, we further analyze these phenomenon and found that the single action sampling for each context at testing time can violate the implicit assumption of sufficient sampling needed to average out stochasticity.
Such a violation might lead the model to misidentify its own skill level, initiating a vicious cycle of performance degradation. 
We name this violation as \textbf{Contextual Ambiguity} problem of the ICRL methods, which means a single stochastically poor action can generate a context that misleadingly resembles a history from a weaker, sub-optimal policy. 

To fundamentally address the Contextual Ambiguity problem, we introduce a theoretical construct, the \textbf{Context Value} ($V_C$) and propose \textbf{Context Value Informed ICRL} (CV-ICRL). We define the context value as the ideal performance theoretically achievable by a policy given the information in context $C$, i.e., $V_C = J(\pi_{C}^*)$. The core purpose of introducing the Context Value is to provide an unambiguous quality label for the current context $C$. This label allows the policy to bypass the perilous inference from noisy historical interactions and instead adjust its behavior based on this value signal. We theoretically prove that the introduction of Context Value mitigates the degradation caused by ambiguity and tightens the performance bound between the learned policy and the context-optimal policy, thereby providing stronger guarantees of performance monotonicity during inference.

In practice, CV-ICRL estimate the context-optimal policy $\pi^*_{C_i}$ with the context generated by behavior policy $\pi_i$, and consequently, estimate the Context Value $V_{C_i}$ as its expected return. Building on this, we propose two methods for estimating the Context Value at both training and testing time. Moreover, we prove that when the estimation errors of $\pi^*_{C_i}$ and $V_{C_i}$ are sufficiently small, the tightened performance bound still holds. Then we conducted experiments on tasks in the Darkroom and Minigrid environments. The results demonstrate that our proposed method successfully addresses performance degradation, improves the stability of test-time performance improvement, and achieves significant gains in metrics such as average episode return.

In summary, our main contributions are given below:

\begin{itemize}
    \item We identify that AD-like ICRL methods often suffer from performance degradation at testing time, failing to preserve the monotonic improvement property of training data. We analyze this phenomenon and attribute it to context ambiguity, where randomness in test-time sampling misleads decision-making.
    \item We introduce Context Value as a measure of context quality and propose CV-ICRL. We provide a theoretical guarantee that incorporating it yields a tighter performance bound between the learned policy and the context-optimal policy, thereby better preserving monotonicity.
    \item 
    Experiments demonstrate that CV-ICRL effectively mitigates performance degradation and improves overall performance. Moreover, our study is the first to show that AD-like ICRL methods exhibit strong generalization across different task types in Minigrid.
\end{itemize}

\section{Related Works}
In-context reinforcement learning (ICRL) is a subfield of meta-RL that operates in few-shot, multi-task settings~\citep{beck2023survey}, where an agent adapts by conditioning on recent trajectory context without gradient updates. In practice, ICRL is often instantiated with causal Transformers that model long-horizon context and act autoregressively. This paradigm was popularized by Decision Transformer (DT)~\citep{DT}, which frames RL as conditional sequence modeling—predicting the next action from a history of states, actions, and rewards via supervised learning on trajectory data—in lieu of value iteration or policy gradients. Building upon this foundation, Prompt-DT~\citep{Prompt-DT} showed that by injecting contextual prompts, such as natural language instructions or goal specifications, into the input sequence, a single pretrained model could be guided to solve various tasks without fine-tuning. This use of contextual information to steer behavior which represents an early form of the ICRL method.

Algorithm Distillation (AD)~\citep{AD} was the first approach to leverage a causal Transformer to address the problem. The core idea of AD is to distill the online RL learning process into a large causal model via supervised learning. Since then, a series of AD-like methods have been proposed, all of which share the same training paradigm, that the ICRL model is trained on continuously enhanced trajectories. AD$^\epsilon$~\citep{ADeps} demonstrates that actual trajectories from online RL are not strictly required; instead, training trajectories can be simulated by sampling from a noised model and gradually reducing the noise level. Agentic Transformer (AT)~\citep{AT} organizes training trajectories by their episode rewards, aligning them with a chain of hindsight targets. Building on AT, In-context Decision Transformer (IDT)~\citep{IDT} highlights the computational challenges of processing long-horizon inputs in Transformer models, and introduces a hierarchical decision-making structure to model longer contexts. Our method follows AD-like training frameworks, but differs in that we explicitly addresses the gap between training data and test-time performance: we identify that Contextual Ambiguity at testing time can lead to performance degradation, and propose an improved algorithm to mitigate this issue.

Beyond the AD-like methods, several alternative approaches to ICRL have also been explored, which provide broader perspectives for advancing this field. Decision-Pretrained Transformer (DPT)~\citep{DPT} adopts a posterior-sampling perspective, using optimal actions as supervised signals and acting optimally for a task sampled from the posterior, but it requires access to task-optimal policies and struggles with out-of-distribution generalization. Scalable In-Context Q-Learning (SICQL)~\citep{SICQL} takes an offline Q-learning approach, enabling explicit value estimation and credit assignment to extract high-quality actions even from suboptimal trajectories. AMAGO~\citep{AMAGO, AMAGO2} instead targets the online RL setting, employing an actor-critic framework and off-policy learning design to train long-sequence transformers in a scalable and fully end-to-end manner.

\section{Backgrounds}
\subsection{Markov Decision Process}
We model reinforcement learning (RL) as a Markov Decision Process (MDP) $\mathcal{M}=(\mathcal{S},\mathcal{A},T,R,\gamma)$, where $T(s'|s,a)$ is the transition probability function, $R(s,a)$ the reward, and $\gamma\in[0,1)$ the discount. At time $t$, the agent observes $s_t$, samples $a_t\sim\pi(\cdot|s_t)$, receives $r_t=R(s_t,a_t)$, and transitions $s_{t+1}\sim T(\cdot|s_t,a_t)$. 
The objective in RL is to find an optimal policy $\pi^*$ that maximizes the expected sum of discounted rewards, denoted as \(J(\pi)\). \(J(\pi)\) is defined as the expected return starting from an initial state \(s_0\) sampled from a distribution \(p_0\), i.e. $J(\pi) = \mathbb{E}_{s_0 \sim p_0}[V^{\pi}(s_0)]$, where $V^{\pi}(s) = \mathbb{E}_{\pi, T} \left[ \sum_{t=0}^{\infty} \gamma^t R(s_t, a_t) \mid s_0 = s \right]$, and the expectation \(\mathbb{E}_{\pi, P}\) is over the trajectory distribution induced by the policy \(\pi(a_t|s_t)\) and the environment's dynamics \(P(s_{t+1}|s_t, a_t)\).
\begin{figure}[!t]
    \centering
    \vspace{-10pt}
    \includegraphics[width=1.01\linewidth]{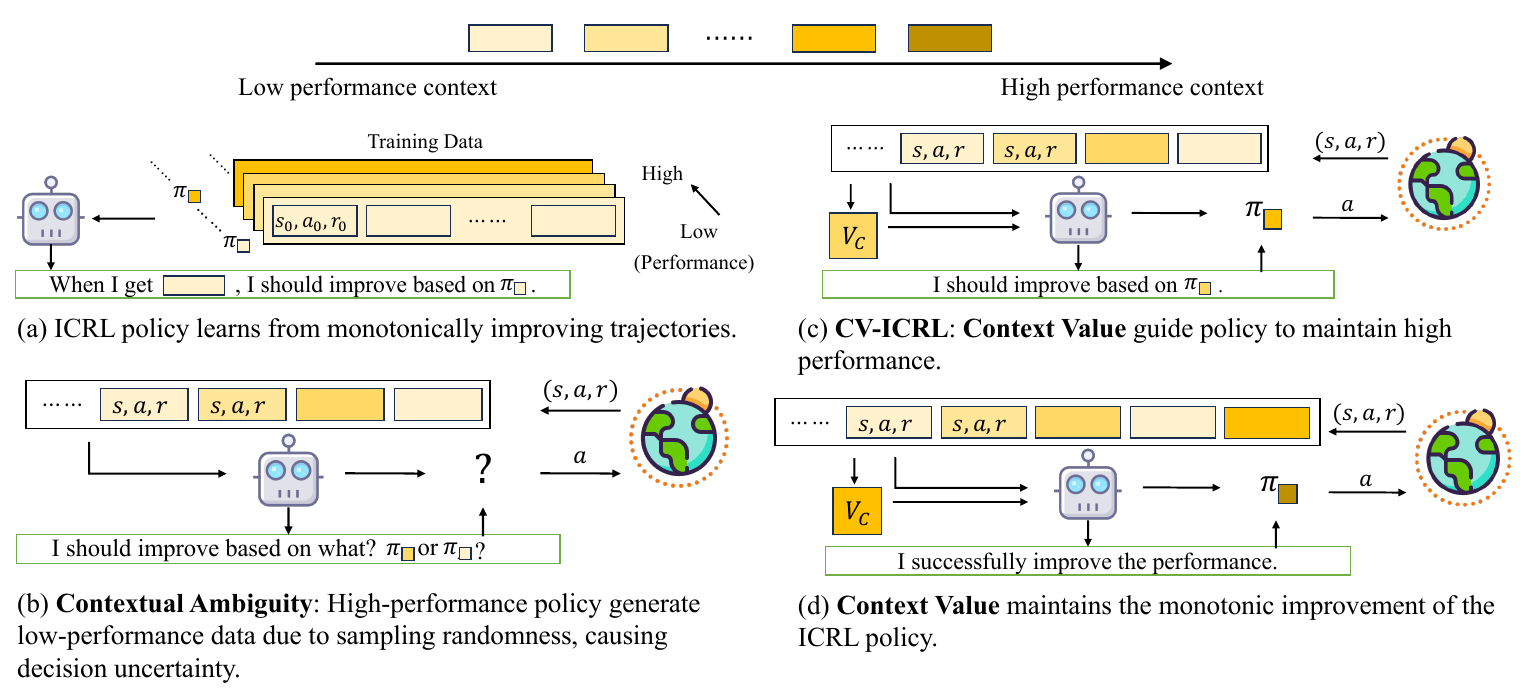}
    \vspace{-10pt}
    \caption{ICRL policy is trained on monotonically improving trajectories (a). However, \textbf{Contextual Ambiguity} breaks monotonic improvement at testing time (b). We propose \textbf{CV-ICRL} (c), in which \textbf{Context Value} helps the ICRL policy to preserve monotonic improvement (d).}
    \label{fig:main}
    \vspace{-15pt}
\end{figure}

\begin{figure}[t!]
\begin{center}
\includegraphics[width=0.8\textwidth]{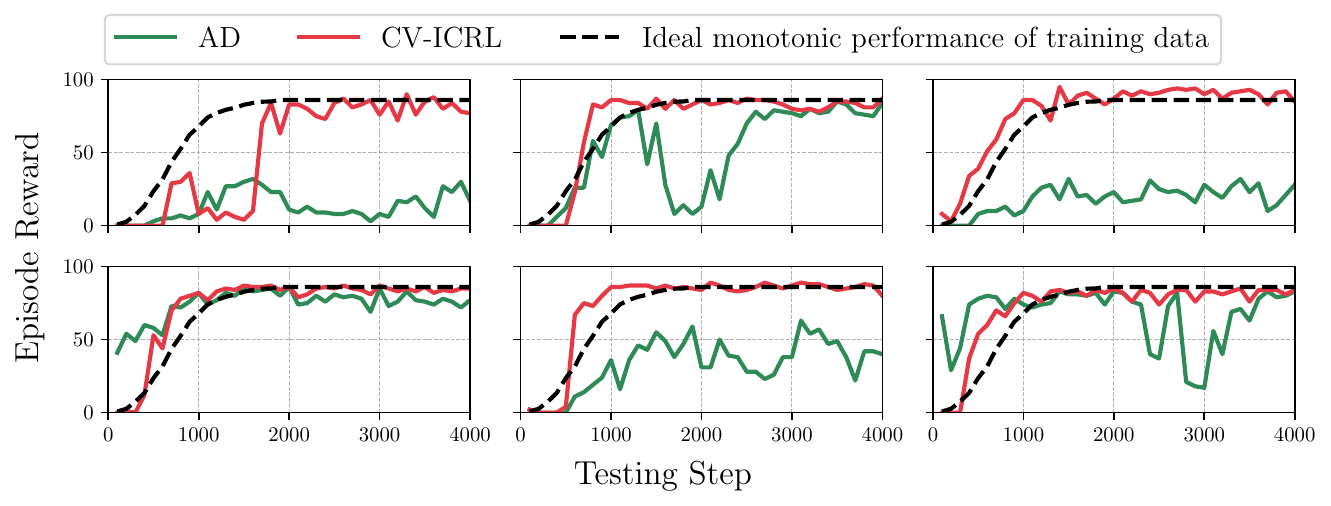}
\end{center}
\vspace{-10pt}
\caption{Across 6 different Dark Room tasks, AD struggles to maintain the ideal monotonic performance as training data shows, and CV-ICRL successfully maintains it.}
\label{fig:case_study}
\vspace{-10pt}
\end{figure}

\subsection{In-context reinforcement learning}
Instead of single, fixed MDP for RL, In-Context Reinforcement Learning (ICRL) considers a distribution of tasks $p(\tau)$, where each task $\tau$ is a distinct MDP $(\mathcal{S}, \mathcal{A}, T_\tau, R_\tau, \gamma)$. The core challenge is to train a single, general policy that can quickly infer the dynamics and reward structure of a new task from a small amount of interaction history and then act near-optimally. An ICRL policy, accordingly, is conditioned not just on the current state, but on the history of recent interactions. A history or context, $C_t$, is a sequence of state-action-reward tuples: $C_t = (s_0, a_0, r_0, \dots, s_{t-1}, a_{t-1}, r_{t-1}, s_t)$. The ICRL policy $\pi^\text{ICRL}$ takes this context rather than single-step state to predict the next action: $\pi^\text{ICRL}_{C_t}=\pi^\text{ICRL}(a_t | C_t)$. For convenience, we labeled $\pi^\text{ICRL}_{C_t}$ as $\pi_{C_t}$ in the following text.The model must learn to recognize patterns within the trajectory context to deduce the underlying MDP dynamics and rewards, effectively performing "in-context" adaptation without updating its network weights. Similarly to $J(\pi)$, the objective of ICRL on task $\tau$ is $J(\pi^{\text{ICRL}}; \tau) = \mathbb{E}_{s_0 \sim \rho_{0, \tau}} 
\left[ V^{\pi^{\text{ICRL}}}(s_0;\tau) \right]$, where $V^{\pi^{\text{ICRL}}}(s;\tau) = \mathbb{E}_{\pi^{\text{ICRL}}, T_\tau} \left[ \sum_{t=0}^{\infty} \gamma^t R_{\tau}(s_t, a_t) \mid s_0 = s \right]$. Then $J(\pi^{\text{ICRL}}) = \mathbb{E}_{\tau \sim p(\tau)} \left[ J(\pi^{\text{ICRL}}; \tau) \right]$.

\subsection{Contextual Ambiguity in In-Context RL}
\label{sec:contextual_misleading}
AD-like ICRL algorithms are typically trained under a set of assumptions that guide the learning process. These assumptions are formally described as follows:

\begin{assumption}[Properties of Training Data in ICRL]
Algorithm Distillation-like ICRL policies are trained on datasets generated from a sequence of monotonically improving expert policies, \( \{\pi_0, \pi_1, \dots, \pi_T\} \), with the following assumptions:
\begin{enumerate}
    \vspace{-5pt}
    \item \textbf{Generative Process:} Each context \( C_t \) is generated by a sequence of actions sampled from corresponding source policies, \( a_t \sim \pi_t(\cdot|s_t) \).
    \vspace{-5pt}
    \item \textbf{Sufficient Sampling:} The training set contains sufficient samples from each policy \( \pi_i \) to allow the model to learn a robust mapping from the contexts \( C_t \) to the corresponding target policy \( \pi_t(\cdot|s_t) \).
    \vspace{-5pt}
    \item \textbf{Monotonic Improvement:} The performance of the source policies, measured by a return function \( J(\cdot) \), is monotonically non-decreasing, i.e., \( J(\pi_i) \leq J(\pi_j) \) for all \( i < j \).
    \vspace{-5pt}
\end{enumerate}
\end{assumption}

Given these assumptions, one might expect that the model should also exhibit monotonic performance improvement during testing, similar to the behavior seen during training. However, this ideal scenario does not always hold in practice. 

To investigate this, we conducted a case study in the Dark Room environment and observed that the AD model does not maintain monotonic improvement in episode return as the test timestep increases, as shown in \Cref{fig:case_study}. In some cases, the model's performance even failed to recover to previously achieved levels. We attribute this discrepancy to \textbf{Contextual Ambiguity}, as illustrated in \Cref{fig:main}(b). Due to sampling randomness, short-term contexts may contain low-reward samples that mislead the model and induce suboptimal decisions; consequently, the model misidentifies its stage and transitions prematurely to a more advanced policy, exacerbating performance degradation. In \Cref{appendix:context_ambiguity}, we provide additional insights into Contextual Ambiguity and elaborate on its resulting implications for performance.




\section{Towards Monotonic Improvement in In-Context RL}

\subsection{Context Value: A Step Towards Monotonic Improvement}
\label{sec:4.2}

Ideally, as the context expands, it should contain more task-relevant information ideally. An ideal ICRL policy would be able to infer increasingly useful information from the context, thereby producing a policy whose performance is non-decreasing. Here, we provide the definition of the context-optimal policy.

\begin{definition}[\textit{Context-optimal policy}] For a given context $C$ sampled from task $\tau$, the \textit{context-optimal policy} $\pi_{C}^*$ is the oracle policy that yields the highest expected return that an ICRL policy can achieve, only based on $C$, without any other information of $\tau$.
\end{definition}




\begin{definition}[\textit{Context Value}]
    The \textit{Context Value}, denoted as $V_C$, represents the ideal performance theoretically achievable by a policy given the information in context $C$, i.e. $V_C = J(\pi_{C}^*)$.
\end{definition}
\begin{property}[Monotonicity of $V_C$]
    Let $C'$ be a new context formed by adding a data sample $(s, a, r)$ from task $\tau$ to the original context $C$, i.e. $C' = C \cup \{(s, a, r)\}$. Because $C'$ contains more information about task $\tau$, we have:
\begin{equation}
    J(\pi^*_{C'}) \ge J(\pi^*_{C}) \quad \text{and therefore} \quad V_{C'} \ge V_{C}.
\end{equation}
\vspace{-5pt}
\end{property}
\textbf{Why introduce the Context Value?} The Context Value is introduced to resolve the Contextual Ambiguity. It provides an unambiguous quality label for the current context $C$, enabling the policy to bypass the perilous inference from historical interactions, thereby breaking the cycle of performance degradation.

\begin{property}[Ideal performance monotonicity of ICRL policy via Context Value]
\label{property:2}Let $\pi_C(\cdot|C, V_C)$ be a policy conditioned on both the context and its oracle value.
If we have access to the oracle Context Value $V_C = J(\pi_C^*)$, and the policy $\pi_C(\cdot|C, V_C)$ perfectly learns to output the actions of the context-optimal policy, i.e., $\pi_C(\cdot|C, V_C) = \pi_C^*(\cdot|C)$, then its expected return will be optimal: $J(\pi_C) = V_C = J(\pi_C^*)$. And because of the monotonicity of $V_C$, we have:
\begin{equation}
    J(\pi_{C_j}) = V_{C_j} \ge V_{C_i} = J(\pi_{C_i}), \quad \forall j > i.
\end{equation}
\end{property}
In above property, we made an idealized assumption, that we can access the context-optimal policies $\pi^*_{C}$ for each given $C$ and the ICRL model $\pi_C$ perfectly learns $\pi^*_{C}$ for both naive ICRL methods and our method. However, in real practice, the learning process itself induces a performance gap $E$ (or an expected return error) between the learned policy $\pi^{\text{ICRL}}$ and the monotonically improved optimal policy $\pi^*_{C}$, that is
\begin{equation}
    E = |J(\pi_C^*) - J(\pi_C)|.
\end{equation}
\vspace{-5pt}
\begin{theorem}[\textbf{Improved Performance Bound}]
\label{theorem:1}
Let the worst-case performance errors for the baseline policy and the value-informed policy be bounded as follows:
\begin{align}
\sup_{C} |J(\pi^*_C) - J(\pi_C(\cdot|C))| &\le \frac{2r_{\text{max}}}{(1-\gamma)^2}\epsilon_{\text{base}}, \\ \sup_{C} |J(\pi^*_C) - J(\pi_C(\cdot|C, V_C))| &\le\frac{2r_{\text{max}}}{(1-\gamma)^2}\epsilon_V,
\end{align}
where $\epsilon_{\text{base}} = \sup_{C}D_{TV}(\pi^*_C(\cdot|C) \Vert \pi_C(\cdot|C))$, and $\epsilon_V = \sup_{C}D_{TV}(\pi^*_C(\cdot|C) \Vert \pi_C(\cdot|C, V_C))$

Then we can prove that $\epsilon_{\text{V}} < \epsilon_{\text{base}}$. Thus the \textbf{upper bound} for the value-informed policy $\pi_C(\cdot|C, V_C)$ is \textbf{strictly tighter} than the bound for the baseline policy $\pi_C(\cdot|C)$. 
\end{theorem}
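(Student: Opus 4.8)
The plan is to reduce everything to the total-variation comparison $\epsilon_V<\epsilon_{\text{base}}$: once this is in hand, substituting it into the second displayed hypothesis immediately makes the bound $\frac{2r_{\text{max}}}{(1-\gamma)^2}\epsilon_V$ strictly smaller than $\frac{2r_{\text{max}}}{(1-\gamma)^2}\epsilon_{\text{base}}$, so the whole theorem rests on that one inequality. To analyze it, I would first recast both suprema in a common probabilistic model: treat the latent task $\tau$ — and hence the context-optimal policy $\pi^*_C$ and its value $V_C=J(\pi^*_C)$ — as a random object whose conditional law given an observed context $C$ is genuinely non-degenerate. This non-degeneracy is precisely the Contextual Ambiguity identified in \Cref{sec:contextual_misleading}, and I would promote it to an explicit hypothesis (pointing to the appendix analysis). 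In this model $\pi_C(\cdot|C)$ is a $C$-measurable approximation of $\pi^*_C$, while $\pi_C(\cdot|C,V_C)$ is a $(C,V_C)$-measurable one.

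Step one is the non-strict inequality $\epsilon_V\le\epsilon_{\text{base}}$. Any $C$-measurable predictor is also $(C,V_C)$-measurable — the value-informed policy can always discard its extra argument — so the feasible set over which $D_{TV}(\pi^*_C\Vert\cdot)$ can be minimized for $\pi_C(\cdot|C,V_C)$ contains that for $\pi_C(\cdot|C)$; taking infima pointwise and then $\sup_C$ preserves the ordering. Equivalently, by the tower property, for each $C$ the best $(C,V_C)$-conditional centroid of $\pi^*_C$ under $D_{TV}$ is, in expectation, no farther from $\pi^*_C$ than the best $C$-conditional centroid.

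Step two, the strict part, is where the real work lies. I would localize to a context $C_0$ at (or arbitrarily near) the supremum defining $\epsilon_{\text{base}}$ and argue that such a worst-case context is itself ambiguous: there exist at least two tasks consistent with $C_0$ whose context-optimal policies satisfy $\pi^*_1\ne\pi^*_2$ with distinct values $V_1\ne V_2$. The baseline, forced to commit to a single distribution at $C_0$, incurs $D_{TV}\ge\tfrac12 D_{TV}(\pi^*_1,\pi^*_2)>0$ against at least one target, whereas a policy reading $V_C\in\{V_1,V_2\}$ can output the matching $\pi^*_i$ exactly (or within an arbitrarily small margin). Hence the value-informed error at $C_0$ drops strictly below $\epsilon_{\text{base}}$, and — because this happens at the contexts that are hardest for the baseline, not at an interior point — the supremum itself strictly decreases, yielding $\epsilon_V<\epsilon_{\text{base}}$.

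The main obstacle I anticipate is exactly this last passage from a pointwise improvement to a strict decrease of the supremum, since $\sup$ is insensitive to improvements away from the maximizer; the crux is to establish that the baseline's worst-case error is \emph{created} by Contextual Ambiguity, so that resolving it with $V_C$ necessarily lowers the peak rather than some irrelevant valley. I would handle a possibly-unattained supremum with an $\varepsilon$-net/limiting argument over near-maximizing contexts, and I would isolate the quantitative gap — e.g.\ $\tfrac12\inf D_{TV}(\pi^*_1,\pi^*_2)$ over the ambiguous family, or the minimal value separation $|V_1-V_2|$ propagated through the $\frac{2r_{\text{max}}}{(1-\gamma)^2}$ factor — as the explicit amount by which the performance bound tightens.
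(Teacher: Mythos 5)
Your overall reduction is right---everything hinges on $\epsilon_V<\epsilon_{\text{base}}$---but both halves of your argument for that inequality rest on assumptions the theorem does not grant you, and they differ from the ones the paper actually uses. Your step one (the non-strict inequality) is an information-monotonicity argument: the $(C,V_C)$-measurable predictors contain the $C$-measurable ones, so the infimum over the larger class is no bigger. That is only valid if $\pi_C(\cdot|C)$ and $\pi_C(\cdot|C,V_C)$ are each the TV-optimal projection of $\pi^*_C$ onto its respective class. Nothing in the statement says the learned policies are optimal within their classes; a trained model given an extra input can perfectly well end up \emph{farther} from the target (it can be misled by, or misuse, the extra coordinate), so as written step one does not follow. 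Your step two has exactly the gap you flag yourself: to pass from a pointwise improvement at ambiguous contexts to a strict decrease of the supremum you need (i) that every near-maximizing context of the baseline error is ambiguous and (ii) a uniform quantitative gap there; neither is established, and both would require structural hypotheses on the task distribution that you would have to add explicitly.

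The paper takes a different route that bakes its extra assumptions into a generative model rather than into optimality of the learner. It posits that both policies are posterior mixtures over a finite set of source policies, $\pi_C(\cdot|C)=\sum_{\pi}P(\pi|C)\,\pi(\cdot|s)$ and $\pi_C(\cdot|C,V_C)=\sum_{\pi}P(\pi|C,V_C)\,\pi(\cdot|s)$; \Cref{lemma:1} then bounds each TV distance by $1-P(\pi^*|C,\cdot)+k$, and \Cref{lemma:2} compares the two posteriors via Bayes' rule under a Boltzmann-type likelihood $P(V_C|\pi)\propto\exp(-\beta|V_C-J(\pi)|)$, obtaining $P(\pi^*|C,V_C)\ge(1+\delta_{\text{rel}})P(\pi^*|C)$ whenever $d_J>2d^*$. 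Note that the paper thereby compares the two \emph{upper bounds} $D_V$ and $D_{\text{base}}$ rather than $\epsilon_V$ and $\epsilon_{\text{base}}$ themselves, and its strictness is conditional on the value-gap condition $d_J>2d^*$. If you want to salvage your route, you must state the projection-optimality of both learned policies and a quantitative ambiguity condition on worst-case contexts as explicit hypotheses; at that point your argument becomes a coherent alternative, but it is not the paper's proof and it is not assumption-free.
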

The proof is given in \Cref{sec:proof}.





\subsection{Context Value Informed ICRL and Practical Algorithms}
\label{sec:practical}

In contrast to the idealized formulation of ICRL where the oracle context value \( V_{C_i} \) and context-optimal policies \( \pi^*_{C_i} \) are assumed available, the practical setting only provides an offline dataset of contexts \( C_i \) paired with their observed policies. This mismatch raises the key challenge: how to leverage such limited supervision to approximate the underlying context values and thereby improve adaptation. To bridge this gap, we introduce \textbf{Context Value Informed ICRL (CV-ICRL)}, a new procedure that augments standard ICRL by explicitly estimating the latent value of each context and incorporating it into testing time policy inference. The design follows from \Cref{theorem:1}, leading to a corollary that shows performance can be provably improved when the estimation errors of both the context value \( \widehat{V_C} \) and the context-optimal policy \( \widehat{\pi^*_C} \) are sufficiently small. This reformulation establishes CV-ICRL as a principled extension of ICRL, equipped with a practical pathway for context value estimation and policy refinement under offline data constraints.
\begin{corollary}[Improved performance bound under estimated \( \widehat{V_C} \)]
Let \( \widehat{V_C} \) be the estimator of \( V_C = J(\pi_C^*) \), and the worst-case performance errors be bounded as follows:
\begin{equation}
    \sup_{C} |J(\pi^C) - J(\pi_C(\cdot|C, \widehat{V_C}))| \le \frac{2r_{\text{max}}}{(1-\gamma)^2}\epsilon_{\hat{V}},
\end{equation}
where \( \epsilon_{\hat{V}} = \sup_{C}D_{TV}(\pi^*_C(\cdot|C) \Vert \pi_C(\cdot|C, \widehat{V_C})) \). Then $\epsilon_{\hat{V}}<\epsilon_\text{base}$ if the estimation errors of \( \widehat{V_C} \) and \( \widehat{\pi_C^*} \) are sufficiently small. The proof and required bounds are provided in \Cref{appendix:a2}.
\end{corollary}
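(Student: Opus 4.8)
The plan is to reduce the corollary to \Cref{theorem:1} by tracking how the two practical estimation errors perturb the total-variation gap that \Cref{theorem:1} already controls. Write $\Delta_V := \sup_{C}|V_C - \widehat{V_C}|$ for the value-estimation error and $\Delta_\pi := \sup_{C} D_{TV}(\pi^*_C(\cdot|C)\,\Vert\,\widehat{\pi^*_C}(\cdot|C))$ for the error in estimating the context-optimal policy, and let $\delta := \epsilon_{\text{base}} - \epsilon_V > 0$ be the strict improvement margin supplied by \Cref{theorem:1}. The target statement will follow once I establish a bound of the shape $\epsilon_{\hat{V}} \le \epsilon_V + (\text{a remainder that tends to }0\text{ as }\Delta_V,\Delta_\pi\to 0)$: then making the estimation errors small enough to push that remainder below $\delta$ forces $\epsilon_{\hat{V}} < \epsilon_{\text{base}}$, which is exactly what ``sufficiently small'' should be taken to mean.

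First I would insert the oracle-value-conditioned policy as a pivot and apply the triangle inequality for $D_{TV}$: for every context $C$,
\[
D_{TV}\!\bigl(\pi^*_C(\cdot|C)\,\Vert\,\pi_C(\cdot|C,\widehat{V_C})\bigr)\;\le\; D_{TV}\!\bigl(\pi^*_C(\cdot|C)\,\Vert\,\pi_C(\cdot|C,V_C)\bigr)\;+\;D_{TV}\!\bigl(\pi_C(\cdot|C,V_C)\,\Vert\,\pi_C(\cdot|C,\widehat{V_C})\bigr).
\]
The first term on the right is $\le \epsilon_V$ by the definition used in \Cref{theorem:1}. For the second term I would invoke a stability assumption on the value-conditioned policy head --- a Lipschitz (or, more weakly, a modulus-of-continuity) property $D_{TV}(\pi_C(\cdot|C,v)\,\Vert\,\pi_C(\cdot|C,v'))\le L|v-v'|$ --- so that this term is at most $L\Delta_V$. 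In the practical setting the model is only ever trained to reproduce the \emph{estimated} optimal policy $\widehat{\pi^*_C}$, not the true $\pi^*_C$; to account for this I would replace the ``perfect learning'' hypothesis of \Cref{theorem:1} by $\pi_C(\cdot|C,V_C) = \widehat{\pi^*_C}(\cdot|C)$ and absorb the mismatch through one further triangle step, contributing an extra $\Delta_\pi$. Collecting terms yields $\epsilon_{\hat{V}}\le \epsilon_V + L\Delta_V + \Delta_\pi$.

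It then remains to (i) close the inequality --- under the precise hypothesis $L\Delta_V + \Delta_\pi < \delta = \epsilon_{\text{base}} - \epsilon_V$ we obtain $\epsilon_{\hat{V}} < \epsilon_{\text{base}}$ --- and (ii) convert the TV bound into the claimed return bound $\sup_C |J(\pi^*_C) - J(\pi_C(\cdot|C,\widehat{V_C}))| \le \frac{2r_{\text{max}}}{(1-\gamma)^2}\epsilon_{\hat{V}}$. Step (ii) is not re-derived from scratch: it is exactly the policy-to-performance conversion (a performance-difference / simulation-lemma estimate bounding a discounted return gap by $\frac{2r_{\text{max}}}{(1-\gamma)^2}$ times a supremal TV policy discrepancy) already used for the two displayed bounds in \Cref{theorem:1}, now instantiated at the policy $\pi_C(\cdot|C,\widehat{V_C})$.

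I expect the main obstacle to be justifying the stability of $\pi_C(\cdot|C,v)$ in its value argument: nothing in the ICRL setup a priori forces the learned policy to depend continuously on $v$, so the constant $L$ (or a modulus $\omega$ with $\omega(\Delta_V)\to 0$) likely has to be introduced as an explicit regularity assumption on the value-conditioned head. A secondary subtlety is consistency of the two estimators --- one must ensure that the regime in which $\widehat{V_C}\to V_C$ is compatible with the regime in which $\widehat{\pi^*_C}\to\pi^*_C$, so that $\Delta_V$ and $\Delta_\pi$ genuinely vanish together and the remainder $L\Delta_V+\Delta_\pi$ can in fact be driven below $\delta$.
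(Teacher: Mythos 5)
Your proposal is correct in outline but takes a genuinely different route from the paper. You treat the corollary as a stability question: pivot on the oracle-conditioned policy via the triangle inequality $D_{TV}(\pi^*_C \Vert \pi_C(\cdot|C,\widehat{V_C})) \le D_{TV}(\pi^*_C \Vert \pi_C(\cdot|C,V_C)) + D_{TV}(\pi_C(\cdot|C,V_C) \Vert \pi_C(\cdot|C,\widehat{V_C}))$, bound the second term by a Lipschitz modulus $L\Delta_V$ of the value-conditioned head, and conclude $\epsilon_{\hat V}<\epsilon_{\text{base}}$ whenever $L\Delta_V+\Delta_\pi$ falls below the margin $\delta=\epsilon_{\text{base}}-\epsilon_V$ supplied by \Cref{theorem:1}. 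The paper (\Cref{appendix:a2}) instead re-runs the Bayesian machinery: it observes that \Cref{theorem:2} and \Cref{lemma:1} never use $V_C=J(\pi_C^*)$ and so survive the substitution $V_C\mapsto\widehat{V_C}$ verbatim, then propagates the perturbation $d_V=|V_C-\widehat{V_C}|$ through the likelihood ratio $\Gamma_i$ of \Cref{lemma:2} via two triangle inequalities on $|\widehat{V_C}-J(\pi_i)|$ and $|\widehat{V_C}-J(\pi^*)|$, arriving at $\Gamma_{\min}=\exp(\beta(d_J-2d^*-2d_V))$ and the explicit condition $d_J-2d^*-2d_V>0$. The trade-off is instructive: your argument is more modular (it does not care \emph{why} conditioning on the oracle value helps, only that it does), but it pays for this by importing a continuity assumption on $v\mapsto\pi_C(\cdot|C,v)$ that the paper never makes and that is not obviously verifiable for a trained network, and its smallness condition is phrased in terms of the unobservable margin $\delta$ rather than the dataset-level quantities $d_J,d^*$. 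Note that inside the paper's own mixture model $\pi_C(\cdot|C,v)=\sum_{\pi}P(\pi|C,v)\pi(\cdot|s)$ with the Boltzmann likelihood, the Lipschitz property you postulate is actually derivable (the posterior weights vary smoothly in $v$), which is in effect what the paper's $\Gamma_i$ computation does for you. Two smaller points: your extra $\Delta_\pi$ term risks double counting, since the first triangle term is already bounded by $\epsilon_V$ by definition (in the paper the estimation error of $\widehat{\pi^*_C}$ enters only through the constants $k$ and $d^*$ that are already inside the bounds of \Cref{lemma:1} and \Cref{lemma:2}); and both your argument and the paper's really compare \emph{upper bounds} on the two TV suprema rather than the suprema themselves, so the strict inequality $\epsilon_{\hat V}<\epsilon_{\text{base}}$ should be read in that sense in either version.
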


We propose two practical algorithms that share a common method for estimating $V_C$ at training time and differs at testing time. At training time, we estimate the context-optimal policy of context \( C_i \) as the policy \( \pi_i \) from the dataset, i.e. \( \widehat{\pi^*_{C_i}} := \pi_i \). Consequently, our estimation for the Context Value \( \widehat{V_{C_i}} \) is the expected return of this target policy: \( \widehat{V_{C_i}} := J(\pi_i) \). At testing time, as we cannot access the expected return of the behavior policy \( J(\pi_i) \), we propose two different ways to estimate it:
\begin{enumerate}
    \item \textbf{CV-ICRL-$\phi(C)$} (Estimate \( V_C \) through \( C \)): We parameterize \( V_C \) as a function of context $\phi(C)$, implemented as an auxiliary output head in the Transformer-based ICRL model. During training, the source policy return \( J(\pi_t) \) serves as the supervision signal for this head. This design enables estimated $V_C$ to adapt to task information embedded in the context, but the ambiguous contexts may lead to inaccurate estimates.
    
    \item \textbf{CV-ICRL-$\phi(t)$} (Estimate \( V_C \) through timestep): Motivated by the premise that Context Value should ideally increases as more task-relevant information is gathered over time, we tied estimated $V_C$ to the monotonically increasing variable, timestep. This guarantees monotonicity and robustness against context ambiguity, as \( \phi(t) \) evolves independently of contexts. However, it may not adapt to task difficulty, leading to potential misalignment with the true Context Value.
\end{enumerate}
Further details and pseudocodes are provided in \Cref{appendix:algo}.

\section{Experiments}

\subsection{Experimental Setup}
\paragraph{Environments.}We use Dark Room~\citep{AD} and Minigrid~\citep{MinigridMiniworld23} to evaluate our methods. Dark Room is a commonly used environment for ICRL algorithms, where Minigrid presents a more challenging and diverse benchmark than Dark Room. Minigrid tasks feature underlying MDPs that differ not only in their reward functions but also in their observation spaces. Furthermore, the consistent observation space across different task families in Minigrid allows us to rigorously test the model's cross-task generalization capabilities. 

\paragraph{Unseen Task and Unseen Task Types of each Environment.}
For Dark Room that has only one task type, we test ICRL policies on 20 unseen tasks that did not appear in training datasets. 
For Minigrid, within each task type (e.g. LavaCrossingS9N3), we train baselines and CV-ICRL methods in 400 tasks and test on 20 unseen tasks. And we test additional unseen 4 task types as a showcase of generalization abilities of ICRL policies in task-type level.
More details of unseen tasks and task types are provided in \Cref{Appendix:env}.

\paragraph{Preparations of Training Datasets.}
We collect training datasets from PPO algorithm~\citep{ppo} learning process. For Dark Room, we train PPO as the same way as Algorithm Distillation. For Minigrid, however, the PPO policy faces instability when directly training from scratch. Thus we use a pretrain-finetune mode, that is, we firstly pretrained a PPO model on many seeds, then finetune them on certain seed if needed. More details can be found in \Cref{Appendix:pretrain}.

\paragraph{Baselines.}
We use three AD-like ICRL methods, AD~\citep{AD}, AD-$\epsilon$~\citep{ADeps}, and IDT~\citep{IDT}, as baselines. 
We implement all baselines as well as our method on a GPT-2~\citep{gpt2} based backbone, ensuring comparable parameter scales and closely matched architectural hyperparameters. More details can be found in \Cref{appendix:imple}.

\paragraph{Metrics.} To quantify the frequency of performance degradation in our experiments, we propose a metric named \textbf{Degradation Frequency}, which is computed as the proportion of episodes in which the episode reward decreases by at least 5\% compared to the previous one. The Degradation Frequency \( D_F \) is given by:
\begin{equation}
D_F = \frac{1}{N} \sum_{i=1}^{N} \mathbb{I}(r_i \leq 0.95 \cdot r_{i-1}),    
\end{equation}
where \( N \) is the total number of episodes, \( r_i \) is the reward of the \( i \)-th episode, and \( \mathbb{I}(\cdot) \) is the indicator function.
To evaluate the overall performance, we use \textbf{Average Episode Return} (AER) as our main metric. In addition, we also consider the \textbf{Last Episode Return} (LER), which is used in previous work~\citep{tarasovyes}, as it reflects the final performance, indicating the ultimate in-context learning outcomes.

\subsection{Main Results}

    
\begin{table}[!t]
\caption{We conduct a comprehensive set of experiments in 6 different types of tasks in the Minigrid. Details of these 6 tasks are provided in \Cref{Appendix:env}. We report the mean and variance for 20 seeds (corresponding to different unseen tasks) of AER, LER, and Degradation Frequency for each task types. The Degra. Freq. for IDT is omitted as it fails in BlockedUnlockPickup. The best results are \textbf{in bold} and the second-best are \underline{underlined}.}
    \vspace{-10pt}
\label{table:minigrid}
\begin{center}
\resizebox{\textwidth}{!}{%
\begin{tabular}{cc|lll>{\columncolor{lightgray!30}}l>{\columncolor{lightgray!30}}l}
\toprule
\bf{Task Type} & \bf{Metric} & \bf{AD} & \bf{AD-$\epsilon$} & \bf{IDT} & \bf{CV-ICRL-$\phi(t)$} & \bf{CV-ICRL-$\phi(C)$} \\
\midrule
\multirow{3}{*}{\makecell{LavaCrossing\\S9N3}}   & AER & $0.918\pm0.035$ & $0.902\pm0.059$ & $0.915\pm0.057$ & $\bm{0.934\pm0.027}$ & $\underline{0.921\pm0.051}$ \\
                                    & LER & $0.933\pm0.051$ & $0.936\pm0.052$ & $\underline{0.945\pm0.020}$ & $\bm{0.948\pm0.015}$ & $0.939\pm0.026$ \\
                                    & Degra. Freq. ($\%$) & $\underline{3.706\pm4.456}$ & $6.149\pm9.108$ & $5.590\pm8.810$ & $\bm{2.422\pm4.067}$ & $5.228\pm8.787$\\
\midrule
\multirow{3}{*}{\makecell{LavaCrossing\\S9N2}}   & AER & $0.937\pm0.027$ & $0.918\pm0.032$ & $0.898\pm0.097$ & $\bm{0.944\pm0.016}$ & $\underline{0.943\pm0.015}$ \\
                                    & LER & $\bm{0.954\pm0.012}$ & $0.946\pm0.034$ & $0.927\pm0.060$ & $\underline{0.949\pm0.024}$ & $\bm{0.954\pm0.011}$ \\
                                    & Degra. Freq. ($\%$) & $2.101\pm2.501$ & $3.524\pm3.735$ & $8.213\pm12.041$ & $\bm{0.815\pm0.872}$ & $\underline{1.557\pm1.289}$ \\
\midrule
\multirow{3}{*}{\makecell{SimpleCrossing\\S9N3}}   & AER & $0.929\pm0.067$ & $0.152\pm0.119$ & $0.218\pm0.181$ & $\underline{0.942\pm0.016}$ & $\bm{0.945\pm0.015}$ \\
                                    & LER & $0.941\pm0.029$ & $0.396\pm0.252$ & $0.529\pm0.226$ & $\bm{0.950\pm0.015}$ & $\underline{0.949\pm0.012}$ \\
                                    & Degra. Freq. ($\%$) & $3.428\pm8.615$ & $81.378\pm10.588$ & $70.875\pm15.810$ & $\bm{1.398\pm1.863}$ & $\underline{1.561\pm1.880}$ \\
\midrule
\multirow{3}{*}{\makecell{SimpleCrossing\\S11N5}}   & AER & $0.865\pm0.124$ & $0.138\pm0.107$ & $0.323\pm0.296$ & $\underline{0.897\pm0.070}$ & $\bm{0.902\pm0.076}$ \\
                                    & LER & $0.886\pm0.142$ & $0.329\pm0.242$ & $0.587\pm0.331$ & $\underline{0.905\pm0.082}$ & $\bm{0.921\pm0.066}$ \\
                                    & Degra. Freq. ($\%$) & $15.754\pm15.686$ & $73.744\pm12.312$ & $67.190\pm26.947$ & $\underline{13.694\pm13.729}$ & $\bm{13.209\pm13.108}$ \\
\midrule
\multirow{3}{*}{\makecell{BlockedUnlock\\Pickup}}   & AER & $0.911\pm0.160$ & $0.219\pm0.370$ & $0.000\pm0.000$ & $\underline{0.952\pm0.008}$ & $\bm{0.955\pm0.010}$ \\
                                    & LER & $0.911\pm0.209$ & $0.057\pm0.134$ & $0.000\pm0.000$ & $\underline{0.953\pm0.019}$ & $\bm{0.961\pm0.005}$ \\
                                    & Degra. Freq. ($\%$) & $2.493\pm7.373$ & $83.978\pm10.186$ & $-$ & $\bm{1.049\pm1.545}$ & $\underline{1.825\pm2.732}$ \\
\midrule
\multirow{3}{*}{\makecell{Unlock}}   & AER & $0.940\pm0.025$ & $0.042\pm0.026$ & $0.258\pm0.157$ & $\underline{0.961\pm0.008}$ & $\bm{0.966\pm0.008}$ \\
                                    & LER & $0.941\pm0.029$ & $0.552\pm0.255$ & $0.492\pm0.239$ & $\underline{0.968\pm0.008}$ & $\bm{0.969\pm0.007}$ \\
                                    & Degra. Freq. ($\%$) & $4.885\pm4.890$ & $86.588\pm10.775$ & $72.213\pm11.375$ & $\underline{0.629\pm0.533}$ & $\bm{0.217\pm0.172}$ \\
\bottomrule
\end{tabular}}
\end{center}
    \vspace{-15pt}
\end{table}

\paragraph{The performance degradation phenomenon is prevalent in many scenarios. CV-ICRL proves effective in alleviating this issue, leading to significant improvements in overall performance.}
 
Building on our initial observations from the Dark Room case study, we first establish that the performance degradation phenomenon is indeed widespread. Our experiments across 6 diverse tasks in the Minigrid environment confirm this prevalence. As presented in \Cref{table:minigrid}, the results for baseline methods show that significant performance degradation occurs frequently, indicated by a high Degradation Frequency (Degra. Freq.).

Against this backdrop, CV-ICRL proves highly effective. The results in \Cref{table:minigrid} show that our method consistently and significantly lowers the Degradation Frequency across all tested tasks. This enhanced stability translates directly to superior overall performance. As illustrated in \Cref{fig:main_result}, our method's learning curves exhibit both higher final returns and smaller confidence intervals, implying more stable and reliable performance. This effectiveness is also demonstrated in Dark Room as shown in \Cref{fig:darkroom}.


\begin{figure}[t!]
    \centering
    \includegraphics[width=0.98\linewidth]{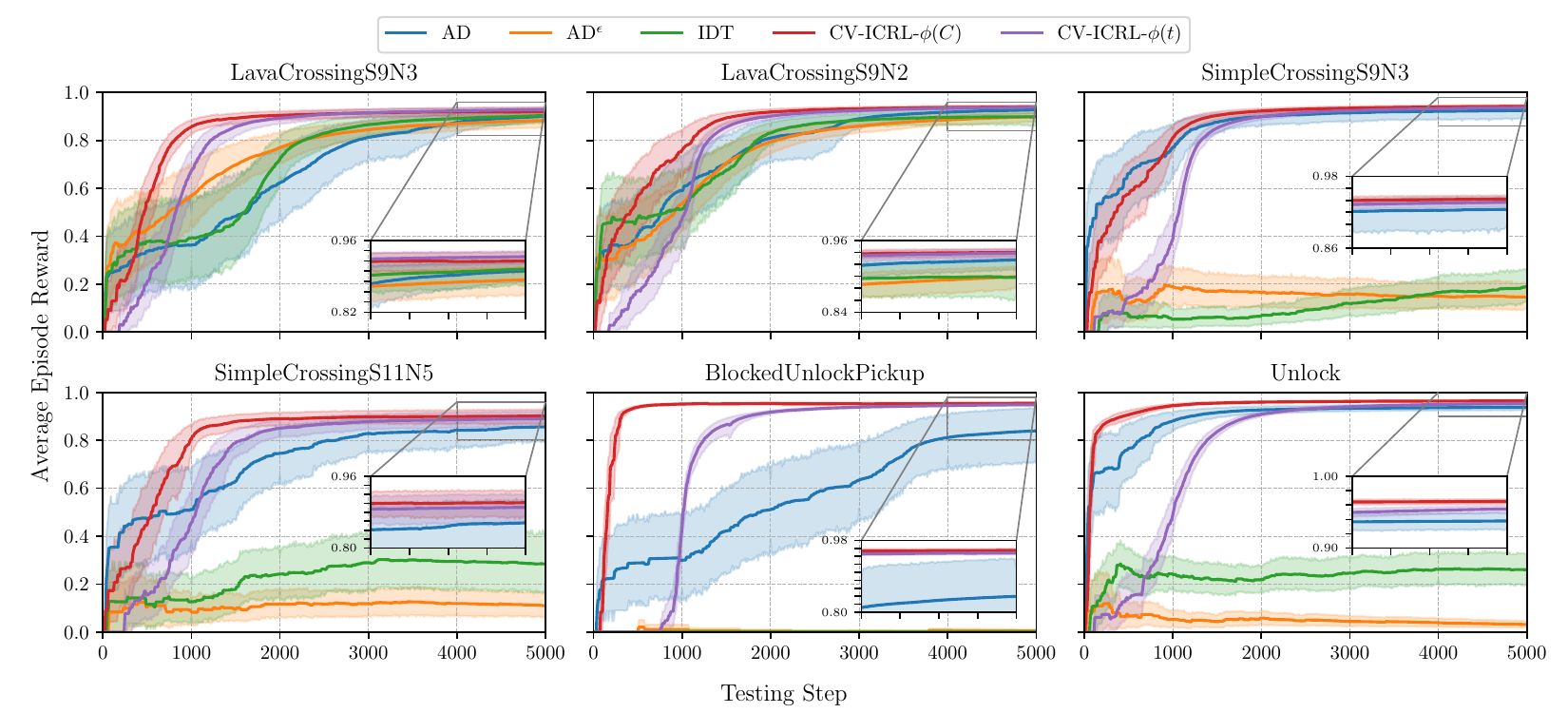}
    \vspace{-10pt}
    \caption{Testing time AER in Minigrid. The curves represent the mean AER over 20 independent tasks with the $95\%$ confidence interval. Both of CV-ICRL-$\phi(C)$ and CV-ICRL-$\phi(t)$ demonstrate a more stable performance improvement and superior final performance.}
    \label{fig:main_result}
        \vspace{-15pt}
\end{figure}
\begin{figure}[t!]
    \centering
    \begin{minipage}[c]{0.3\textwidth}
        \centering
        \includegraphics[width=\linewidth]{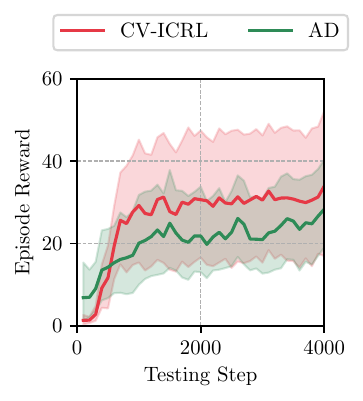}
        \vspace{-10pt}
        \captionof{figure}{Testing time ER in Dark Room. The curves show the mean episode reward averaged over 20 unseen tasks. Our method (CV-ICRL-$\phi(t)$) outperforms AD in terms of episode reward.}
        \label{fig:darkroom}
    \end{minipage}
    \hfill
    \begin{minipage}[c]{0.68\textwidth}
        \centering
        \includegraphics[width=\linewidth]{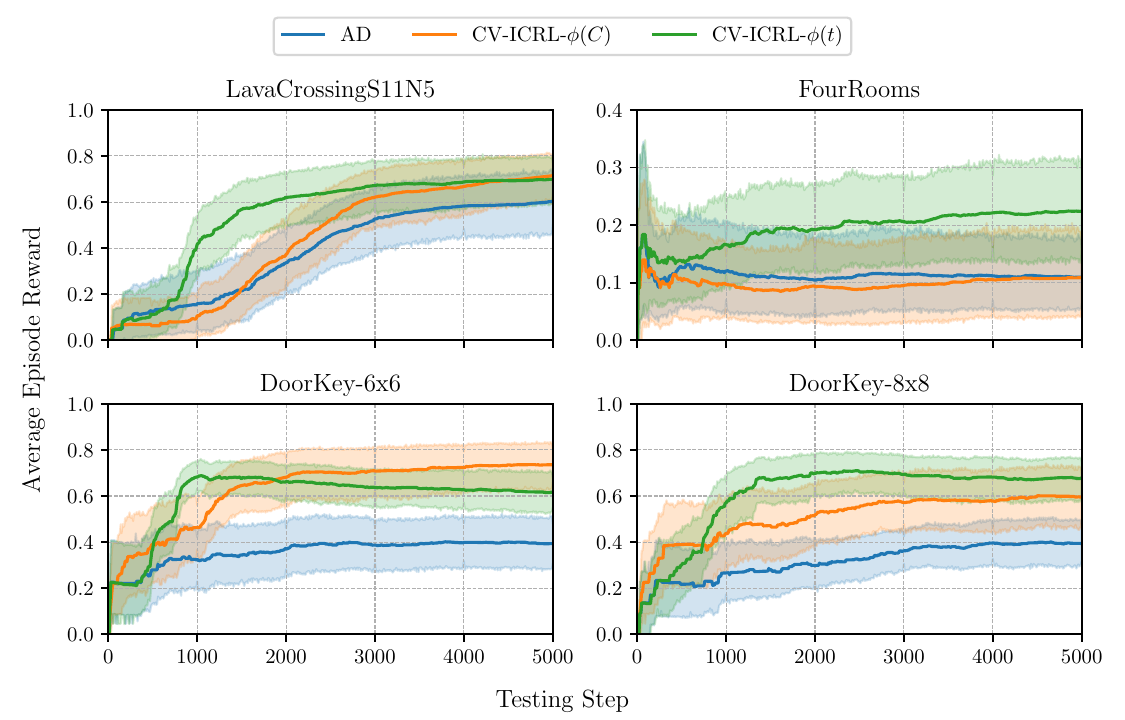}
        \vspace{-15pt}
        \captionof{figure}{Performance of ICRL policy trained on 6 task types and evaluated on 4 unseen task types in Minigrid. While AD demonstrates good generalization across varied tasks and scenes, our method improves this capability. } 
        \label{fig:generalization}
    \end{minipage}
    \vspace{-15pt}
\end{figure}


\paragraph{The performance differences between the two practical CV-ICRL methods meets our expectations.} As shown in \Cref{table:minigrid} and \Cref{fig:main_result}, CV-ICRL-$\phi(C)$ tends to yield better overall performance, while CV-ICRL-$\phi(t)$ tends to result in a lower Degradation Frequency. This aligns with our discussion in \Cref{sec:practical}, where CV-ICRL-$\phi(C)$ benefits from incorporating more context and task related information, which can contribute to superior performance. However, this method is also more susceptible to context ambiguity, potentially affecting stability and performance consistency. On the other hand, CV-ICRL-$\phi(t)$, maintains more monotonic behavior and results in lower degradation frequency. However, the estimate of \( V_C \) in this case may deviate more significantly from the true value, leading to potential performance losses.

\paragraph{The AD-like ICRL algorithm demonstrates generalization across varied tasks and scenes, and our method enhances this ability.}

In prior works, ICRL methods have typically been validated in scenarios where the differences between tasks lie mainly in reward function or simple changes in the transition. Such simple modifications often result in tasks that are not significantly different from those seen in the training set, leading some work to suggest that these ICRL algorithms cannot address out-of-distribution generalization problems~\citep{generalization}. In this study, we aim to validate the OOD generalization capability in the Minigrid environment. We combine data from 6 types of tasks mentioned previously into a single dataset, train a general ICRL policy, and then evaluate its performance on 4 novel types of tasks (introduced in \Cref{Appendix:env}).

As shown in \Cref{fig:generalization}, the performance on these 4 novel tasks  demonstrates that although the AD-like policy has never encountered such scenarios, it still exhibits good generalization ability, effectively embodying the concept of ``learn-to-learn''. 
Moreover, our method, CV-ICRL, outperforms the AD-like policy, showing faster adaptation to novel tasks and achieving a higher average episode return.

\subsection{Ablations}
We conducted ablation experiments (shown in \Cref{fig:ablation} and \Cref{appendix:more_exp}) to isolate the sources of performance improvement, specifically investigating the contributions of (1) using the estimated $V_C$ for contextual guidance, and (2) the necessity of a well-designed $\phi(t)$.

\paragraph{Does the performance of CV-ICRL-$\phi(C)$ improve due to the introduction of an auxiliary task or because \( V_C \) is integrated into the context to guide model decisions?}

To answer this question, we removed \( V_C \) from the context and retrained the model, which effectively reduced the task to one involving only the auxiliary task. This version performed significantly worse than CV-ICRL-$\phi(C)$, indicating that the performance improvement is not merely due to the introduction of an auxiliary task but rather because \( V_C \) is used within the context to guide decision-making.

\paragraph{Does the function \( \phi(t) \) in CV-ICRL-$\phi(t)$ play a crucial role given that there is no clear mapping from the context to \( \phi(t) \)?}

To investigate this, we tested the case where \( \phi(t) \) was replaced with a random function. This resulted in a significant performance drop, with the AER stabilizing around a certain value. This confirms that \( \phi(t) \) plays a critical role in the model's performance, and its absence or replacement with a random function severely impacts the generalization ability. 
We conducted a further comparison experiments of $\phi(t)$ in \Cref{appendix:more_exp}, 

\begin{figure}[t!]
    \centering
    \includegraphics[width=\linewidth]{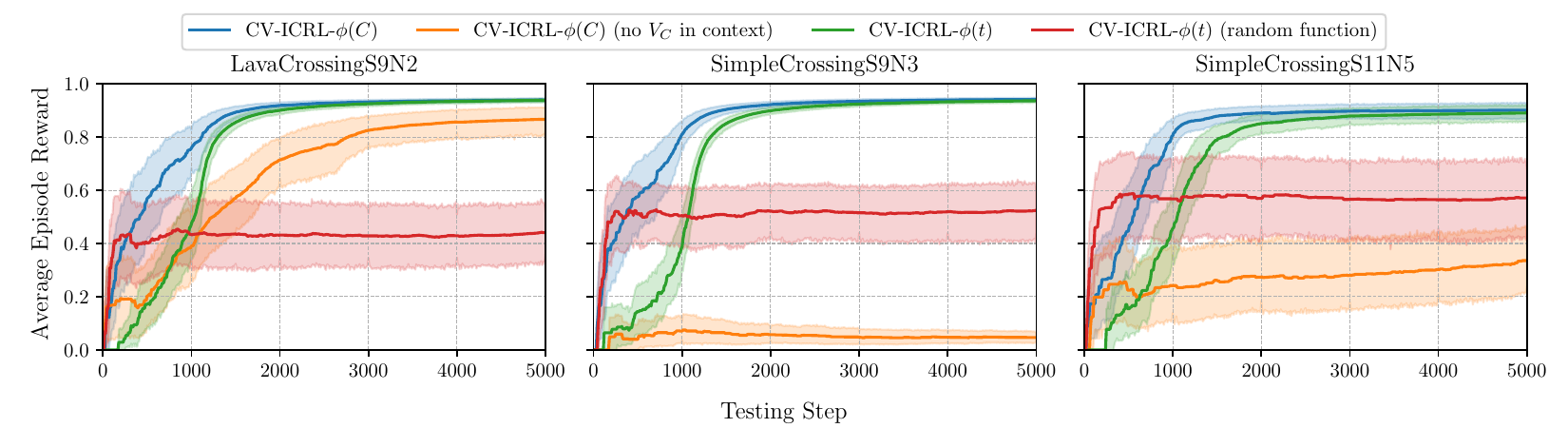}
    \vspace{-15pt}
    \caption{Ablation results. The experiments demonstrate (1) the effectiveness of integrating estimated $V_C$ into the context for CV-ICRL-$\phi(C)$, and (2) the very need of a well-designed $\phi(t)$ for CV-ICRL-$\phi(t)$.}
    \label{fig:ablation}
    \vspace{-10pt}
\end{figure}

\section{Conclusion}

In this paper, we address the common performance degradation of prior ICRL methods, wherein their test-time performance fails to exhibit the monotonic improvement seen during training. We identify the root cause of this issue as \textit{Contextual Ambiguity}, which stems from sampling randomness.  To resolve this, we introduce \textit{Context Value} as an explicit, non-decreasing signal of the ideal performance achievable given the current context. We prove that Context Value tightens the lower bound on the performance gap relative to an ideal policy. Building on this, we propose \textbf{CV-ICRL}, which incorporates practical methods for estimating this value during training and testing. Our experiments in both the Dark Room and Minigrid environments show that our method can effectively alleviate performance degradation and significantly improve the performance. Moreover, our experiments on Minigrid provide the first empirical evidence of the generalization ability of AD-like ICRL algorithms across significantly different tasks types, confirming the ``learn-to-learn'' capability. This insight not only contributes to the development of ICRL methods but also offers valuable directions for future work in in-context learning in large language models (LLMs). Our approach, while grounded in traditional RL scenarios, also presents potential applications in the broader context of LLM-based in-context learning problems~\citep{krishnamurthy2024can, tajwar2025training}.
While CV-ICRL provides useful estimates of Context Value, a dedicated error model in future work could enhance confidence in the reported estimates. 
Future work can further explore more accurate estimation methods for Context Value and refine the approach to ensure stronger monotonic improvement.


\textbf{The Use of Large Language Models.}
We used a large language model as a general-purpose assistant solely for text editing, including grammar correction, wording and tone adjustments, punctuation, and stylistic consistency. The model did not contribute to research ideation, methodology, experimental design, data analysis, interpretation of results, or the generation of substantive academic content or references. All suggestions were reviewed and approved by the authors, who take full responsibility for the final text.

\textbf{Ethics Statement.}
Our method and algorithm do not involve any adversarial attack, and will not endanger human security.
All our experiments are performed in the simulation environment, which does not involve ethical and fair issues.

\textbf{Reproducibility Statement.}
The source code of this paper is available at 
\url{https://github.com/Bluixe/towards_monotonic_improvement}
We specify all the implementation details of our methods in \Cref{appendix:imple}.
The experiment additional results are in the \Cref{appendix:more_exp}.



\bibliography{iclr2026_conference}
\bibliographystyle{iclr2026_conference}
\clearpage

\appendix
\section{Proof of Theorems}
\subsection{Proof of Theorem 1}
\label{sec:proof}
    

To formally quantify the advantage of incorporating value information, we must first establish the baseline performance. We begin by presenting the theoretical guarantee on the performance error for a standard ICRL policy, which is learned without value side-information. This result, adapted from imitation learning works ~\citep{hebridging}, serves as the foundation for our subsequent comparison.

\begin{theorem}[Performance Bound of ICRL Policy]
\label{theorem:2}
The suboptimality gap for the ICRL policy, defined as the worst-case difference in expected return between the learned policy $\pi_C(\cdot|C)$ and the empirical optimal policy $\pi^*_C(\cdot|C)$, is bounded as follows:
    \begin{equation}
        \sup_C |J(\pi^*_C(\cdot|C)) - J(\pi_C(\cdot|C))| \le \frac{2r_{\text{max}}}{(1-\gamma)^2}\epsilon
    \end{equation}
    where $\epsilon$ represents the worst-case statistical divergence between the two policies:
\begin{equation}
\label{eq:eps}
\epsilon = \sup_{C}D_{TV}(\pi^*_C(\cdot|C) \Vert \pi_C(\cdot|C))
\end{equation}
\end{theorem}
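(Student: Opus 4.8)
The statement is the classical imitation-learning performance bound specialized to ICRL (as the text notes, adapted from \citep{hebridging}), so the plan is to turn the ICRL problem into an ordinary discounted MDP and then run the standard performance-difference argument.

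First I would observe that, under any ICRL policy $\pi$, the context process $C_0, C_1, C_2, \dots$ is itself a Markov chain: the last coordinate of $C_t$ is $s_t$; the policy samples $a_t \sim \pi(\cdot\mid C_t)$; the environment produces $r_t = R(s_t,a_t)$ and $s_{t+1}\sim T(\cdot\mid s_t,a_t)$; and $C_{t+1}$ is the deterministic concatenation of $C_t$ with $(a_t, r_t, s_{t+1})$. Hence the context plays the role of the state of an augmented MDP (with a growing but countable state space), and one may define $Q^\pi(C,a)$, $V^\pi(C) = \mathbb{E}_{a\sim\pi(\cdot\mid C)}[Q^\pi(C,a)]$, the advantage $A^\pi(C,a) = Q^\pi(C,a) - V^\pi(C)$, and the $(1-\gamma)$-normalized discounted context-occupancy $d^\pi$, exactly as usual. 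Fixing the reward convention $R \in [0, r_{\text{max}}]$ gives $0 \le Q^\pi, V^\pi \le r_{\text{max}}/(1-\gamma)$ and therefore $\sup_{C,a}|A^\pi(C,a)| \le r_{\text{max}}/(1-\gamma)$.

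Writing $\pi^* := \pi^*_C(\cdot\mid C)$ and $\pi := \pi_C(\cdot\mid C)$ for brevity, I would then apply the performance difference lemma in this augmented MDP and use $\mathbb{E}_{a\sim\pi(\cdot\mid C)}[A^\pi(C,a)] = 0$ to write
\begin{equation}
J(\pi^*) - J(\pi) = \frac{1}{1-\gamma}\,\mathbb{E}_{C\sim d^{\pi^*}}\!\left[\,\sum_a\big(\pi^*(a\mid C) - \pi(a\mid C)\big)\,A^\pi(C,a)\,\right].
\end{equation}
For each $C$, Hölder's inequality bounds the inner sum by $\|A^\pi(C,\cdot)\|_\infty\,\|\pi^*(\cdot\mid C) - \pi(\cdot\mid C)\|_1 = 2\,\|A^\pi(C,\cdot)\|_\infty\, D_{TV}\big(\pi^*(\cdot\mid C)\,\Vert\,\pi(\cdot\mid C)\big)$; substituting the two bounds from the Markov-state observation above together with $D_{TV}\le\epsilon$ and taking absolute values yields
\begin{equation}
|J(\pi^*) - J(\pi)| \le \frac{1}{1-\gamma}\cdot 2\cdot\frac{r_{\text{max}}}{1-\gamma}\cdot\epsilon = \frac{2 r_{\text{max}}}{(1-\gamma)^2}\,\epsilon,
\end{equation}
and since the right-hand side does not depend on $C$ it dominates the supremum on the left, which is the claim. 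An equivalent route that avoids the performance difference lemma is to expand $J$ over the time-$t$ context marginals $P_t^\pi$, add and subtract $\mathbb{E}_{P_t^{\pi^*}}[\bar{R}^{\pi}]$ at each $t$, bound the per-step reward mismatch by $2r_{\text{max}}\epsilon$ and the marginal mismatch $D_{TV}(P_t^{\pi^*},P_t^{\pi})\le t\epsilon$ by an induction using non-expansiveness of Markov kernels under total variation, and then sum the $\gamma^t$ and $t\gamma^t$ series; this reproduces the same constant.

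The main obstacle is really only the reduction in the first paragraph: one has to be comfortable treating the ever-growing interaction history as the state of an MDP and confirm that the usual identities — the Bellman-flow equation for $d^\pi$ and the performance difference lemma — survive on this non-stationary-looking but genuinely Markov state space; granted that, the remainder is the textbook imitation-learning estimate of \citep{hebridging}. A smaller point worth pinning down explicitly is the reward normalization, since the stated constant $2$ (rather than $4$) relies on $R\in[0,r_{\text{max}}]$.
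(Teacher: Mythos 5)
Your proof is correct, but it takes a genuinely different route from the paper's. The paper works directly with the one-step Bellman expansion of $V^{\pi}(s;C)$: it adds and subtracts terms so that the difference $J(\pi^*_C)-J(\pi_C)$ splits into a piece pairing $[\pi^*_C-\pi_C]$ against $[R+\gamma V]$ and a piece pairing $\pi^*_C$ against $[V^*-V]$, applies H\"older's inequality to each, and then closes the resulting self-referential bound $\Vert J^*-J\Vert_\infty \le \frac{2r_{\max}}{1-\gamma}\epsilon + \gamma\Vert J^*-J\Vert_\infty$ by solving for $\Vert J^*-J\Vert_\infty$. You instead formalize the context-augmented MDP up front and invoke the performance difference lemma, bounding the advantage by $r_{\max}/(1-\gamma)$ and the policy mismatch by $2\epsilon$ via the same $\ell_1$--$\ell_\infty$ H\"older step. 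The two arguments share their essential ingredients (the TV-to-$\ell_1$ conversion and the $r_{\max}/(1-\gamma)$ envelope on values), and your recursion-free alternative in the last paragraph --- summing the $\gamma^t$ and $t\gamma^t$ series against the drifting marginals --- is really an unrolled version of the paper's recursion. What your PDL route buys is an exact identity before any inequality is taken, together with an explicit distribution $d^{\pi^*}$ under which the divergence is averaged; this would let you state a strictly sharper bound in terms of $\mathbb{E}_{C\sim d^{\pi^*}}[D_{TV}(\pi^*_C\Vert\pi_C)]$ rather than the supremum, at the cost of having to justify that the Bellman-flow identities carry over to the growing-context state space (the paper implicitly assumes the same thing when it treats $V(\cdot;C')$ as a well-defined value function, so neither proof escapes that reduction). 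Your closing remark about the reward convention is well taken: the constant $2$ rather than $4$ does depend on $R\in[0,r_{\max}]$, which the paper also uses implicitly via $V(s)\le r_{\max}/(1-\gamma)$.
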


\begin{proof}

Here, we recap the definition of the expected return of $\pi_C$, i.e. expected return of $\pi^{\text{ICRL}}$ for given $C$. Suppose $C_{t_0}$ is a context from $t=0$ to $t=t_0$, then
\begin{equation}
\label{eq:j}
    J(\pi_{C_{t_0}}) = V^{\pi^{\text{ICRL}}}(s_{t_0};C_{t_0}) = \mathbb{E}_{\pi^\text{ICRL}, T}[\sum_{t=t_0}^\infty \gamma^tR(s_t, a_t))]
\end{equation}

Expanding $V^{\pi^{\text{ICRL}}}(s;C)$ for one step:
\begin{equation}
    V^{\pi^{\text{ICRL}}}(s;C) = \sum_{a,s'} T(s'|s, a)\pi_C(a|s)[R(s, a) + \gamma V(s';C'))]
\end{equation}
Here, $C' = C \cup \{(a,r,s')\}$.

Thus,
\begin{align}
    & J(\pi^*_{C}) - J(\pi_{C}) \nonumber \\
    =& \sum_{a,s'} T(s'|s, a)[\pi^*_C(a|s)[R(s, a) + \gamma V^*(s';C'))] - \pi_C(a|s)[R(s, a) + \gamma V(s';C'))]] \nonumber \\
    =& \sum_{a,s'} T(s'|s, a)R(s, a)[\pi^*_C(a|s)- \pi_C(a|s)] \nonumber \\ &+\gamma\sum_{a,s'} T(s'|s, a)[\pi^*_C(a|s)V^*(s';C') - \pi_C(a|s) V(s';C')] \nonumber \\
    =& \sum_{a,s'} T(s'|s, a)R(s, a)[\pi^*_C(a|s)- \pi_C(a|s)] \nonumber \\ 
    &+\gamma\sum_{a,s'} T(s'|s, a)\pi^*_C(a|s)[V^*(s';C') -  V(s';C')] + V(s';C')[\pi^*_C(a|s) - \pi_C(a|s)] \nonumber \\
    =& \sum_{a,s'} T(s'|s, a)[R(s, a) + \gamma V(s';C')][\pi^*_C(a|s)- \pi_C(a|s)] \nonumber \\&+\gamma\sum_{a,s'} T(s'|s, a)\pi^*_C(a|s)[V^*(s';C') -  V(s';C')] \nonumber \\
    \le &\ \Vert T(R+\gamma V)\Vert_\infty\Vert\pi^*_C - \pi_C\Vert_\infty + \gamma\Vert V^* - V \Vert_\infty \quad\quad\quad  \triangleright \text{ H\"{o}lder's inequality}\nonumber \\
    \le &\ 2\Vert (R+\gamma V)\Vert_\infty\epsilon + \gamma\Vert J^* - J \Vert_\infty \quad\quad\quad\quad\quad  \triangleright \text{\Cref{eq:eps,eq:j}}\nonumber \\
    \le &\ \frac{2r_{\text{max}}}{1-\gamma}\epsilon + \gamma\Vert J^* - J \Vert_\infty \quad\quad\quad\quad\quad\quad\quad\quad\quad  \triangleright V(s)\le \frac{r_{\text{max}}}{1-\gamma}\nonumber
\end{align}

Then,
\begin{align*}
    \Vert J^* - J \Vert_\infty \le &\ \frac{2r_{\text{max}}}{1-\gamma}\epsilon + \gamma\Vert J^* - J \Vert_\infty \\
    (1-\gamma)\Vert J^* - J \Vert_\infty \le &\ \frac{2r_{\text{max}}}{1-\gamma}\epsilon
\end{align*}

Then we have
\begin{equation}
    \sup_C |J(\pi^*_C(\cdot|C)) - J(\pi_C(\cdot|C))| = \Vert J^* - J \Vert_\infty \le \frac{2r_{\text{max}}}{(1-\gamma)^2}\epsilon
\end{equation}

\end{proof}

Here, we suppose given $C$, the estimator of optimal policy $\widehat{\pi^*_C}(a|C) = \pi^*(a|s)$ is in the training dataset. Besides, for a simpler proof and without loss of generality, we assume that the behavior policy set $\Pi$ is finite. The inference process of an ICRL model can be framed as a two-stage sampling procedure: first, sampling a policy $\pi$ from the distribution of learned source policies conditioned on the current context $C$, and second, sampling an action $a$ from the chosen policy $\pi$. This is formally expressed as a marginalization over all policies $\pi \in \Pi$:
\begin{equation}
    \pi_C(\cdot|C) = \sum_{\pi\in\Pi}P(\pi|C, V_C)\pi(\cdot|s)
\end{equation}

\begin{lemma}
\label{lemma:1}
Let the naive ICRL policy $\pi_C(\cdot|C)$ and $\pi_C(\cdot|C, V_C)$ is the ICRL policy conditioned on $V_C$ = $J(\pi^*_C(\cdot|C))$, and $\pi^*_C(\cdot|C)$ is the optimal policy given $C$. Then
\begin{equation}
    \sup_{C}D_{TV}(\pi^*_C(\cdot|C) \Vert \pi_C(\cdot|C, V_C)) \le 1-P(\pi^*|C, V_C) + k
\end{equation}
\begin{equation}
    \sup_{C}D_{TV}(\pi^*_C(\cdot|C) \Vert \pi_C(\cdot|C)) \le 1-P(\pi^*|C) + k
\end{equation}

where $k=D_{TV}(\pi_C^*(\cdot|C) \Vert \pi^*(\cdot|s))$
     
\end{lemma}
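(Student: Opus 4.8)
\textbf{Proof proposal for Lemma~\ref{lemma:1}.}

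The plan is to bound the total variation distance $D_{TV}(\pi^*_C(\cdot|C) \Vert \pi_C(\cdot|C, V_C))$ by splitting the gap between the oracle $\pi^*_C$ and the mixture $\pi_C(\cdot|C,V_C) = \sum_{\pi\in\Pi} P(\pi|C,V_C)\pi(\cdot|s)$ through the auxiliary anchor $\pi^*(\cdot|s)$, i.e., the estimator $\widehat{\pi^*_C}$ that the paper assumes sits in the training dataset. First I would invoke the triangle inequality for $D_{TV}$ to write
\begin{equation}
D_{TV}(\pi^*_C(\cdot|C) \Vert \pi_C(\cdot|C,V_C)) \le D_{TV}(\pi^*_C(\cdot|C) \Vert \pi^*(\cdot|s)) + D_{TV}(\pi^*(\cdot|s) \Vert \pi_C(\cdot|C,V_C)),
\end{equation}
so the first term is exactly $k$ by definition, and it remains to control the second term by $1 - P(\pi^*|C,V_C)$.

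For the second term, the key step is to decompose the mixture as $\pi_C(\cdot|C,V_C) = P(\pi^*|C,V_C)\,\pi^*(\cdot|s) + \big(1 - P(\pi^*|C,V_C)\big)\,\bar\pi(\cdot|s)$, where $\bar\pi$ is the (normalized) mixture of all the non-optimal source policies, and then use convexity/linearity of $D_{TV}$ in its arguments together with the crude bound $D_{TV}(\pi^*(\cdot|s)\Vert\bar\pi(\cdot|s)) \le 1$. This gives
\begin{equation}
D_{TV}(\pi^*(\cdot|s) \Vert \pi_C(\cdot|C,V_C)) \le \big(1 - P(\pi^*|C,V_C)\big)\,D_{TV}(\pi^*(\cdot|s)\Vert\bar\pi(\cdot|s)) \le 1 - P(\pi^*|C,V_C),
\end{equation}
and combining with the triangle inequality yields the claimed bound; taking $\sup_C$ on both sides finishes the first inequality. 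The second inequality is then identical with $V_C$ removed from the conditioning, since nothing in the argument used a property of $V_C$ beyond it being part of the conditioning information — the same decomposition of $\pi_C(\cdot|C)$ into its $\pi^*$-component and the rest applies verbatim.

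The main obstacle I anticipate is making the ``linearity of $D_{TV}$ in one argument'' step airtight: $D_{TV}(p \Vert \alpha q + (1-\alpha) r) \le \alpha D_{TV}(p\Vert q) + (1-\alpha) D_{TV}(p\Vert r)$ holds by the triangle inequality applied after writing $p = \alpha p + (1-\alpha)p$, so when $q = p$ exactly (here $q=\pi^*(\cdot|s)$, but note $p = \pi^*_C$, \emph{not} $\pi^*$, so they are not literally equal) the bound collapses to $(1-\alpha)D_{TV}(p\Vert r)$. So I would actually apply the convexity bound with $p = \pi^*(\cdot|s)$ as the reference point (giving $(1-P(\pi^*|C,V_C))\cdot 1$ directly, since $D_{TV}(\pi^*\Vert\pi^*)=0$), and only afterward pay the price $k$ to move from $\pi^*(\cdot|s)$ back to $\pi^*_C(\cdot|C)$ via the outer triangle inequality — which is exactly the ordering I wrote above. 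A secondary subtlety is the finiteness assumption on $\Pi$ (already granted in the excerpt), which lets the marginalization be a genuine finite convex combination so that the decomposition into $\pi^*$-mass versus everything-else-mass is well-defined without measure-theoretic care.
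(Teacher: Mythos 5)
Your proposal is correct and follows essentially the same route as the paper: a triangle inequality through the anchor $\pi^*(\cdot|s)$ contributing the term $k$, followed by bounding $D_{TV}(\pi^*(\cdot|s)\Vert\pi_C(\cdot|C,V_C))$ by $1-P(\pi^*|C,V_C)$. Your convexity-of-$D_{TV}$ packaging of the second step is just the cleaned-up form of the paper's element-wise computation (which splits off the $\pi^*$-mass and applies $|x-y|\le|x|+|y|$ before summing over actions), so the two arguments coincide.
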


\begin{proof}
Firstly, using the triangle inequality total variation distance, we have
\begin{equation}
    D_{TV}(\pi^*_C(\cdot|C) \Vert \pi_C(\cdot|C, V_C)) \le D_{TV}(\pi^*_C(\cdot|C) \Vert \pi^*(\cdot|s)) + D_{TV}(\pi^*(\cdot|s) \Vert \pi_C(\cdot|C, V_C))
\end{equation}
The first term is the estimation error caused by the gap between $\pi^*_C$ and $\pi^*$. Then consider the second term.
\begin{align*}
    &D_{TV}(\pi^*(\cdot|s) \Vert \pi_C(\cdot|C, V_C)) \\
    =&\ \frac{1}{2}\sum_a |(\pi^*(a|s) - \pi_C(a|C, V_C)| \\
    =&\ \frac{1}{2}\sum_a |(\pi^*(a|s) - \sum_{\pi\in\Pi}P(\pi|C, V_C)\pi(\cdot|s)| \\
    =&\ \frac{1}{2}\sum_a |(\pi^*(a|s)[1-P(\pi^*|C, V_C)] - \sum_{\pi\in\Pi\setminus\{\pi^*\}}P(\pi|C, V_C)\pi(a|s)|\\
    \le&\ \frac{1}{2}\sum_a \left( \left|\pi^*(a|s)[1-P(\pi^*|C, V_C)]\right| + \left|\sum_{\pi\in\Pi\setminus\{\pi^*\}}P(\pi|C, V_C)\pi(a|s)\right| \right) \\
    = &\ \frac{1}{2}\sum_a \left( \pi^*(a|s)[1-P(\pi^*|C, V_C)] + \sum_{\pi\in\Pi\setminus\{\pi^*\}}P(\pi|C, V_C)\pi(a|s) \right) \\
    = &\ \frac{1}{2} \left( [1-P(\pi^*|C, V_C)] \sum_a \pi^*(a|s) + \sum_{\pi\in\Pi\setminus\{\pi^*\}}P(\pi|C, V_C) \sum_a \pi(a|s) \right)
\end{align*}
\begin{align*}
    = &\ \frac{1}{2} \left( [1-P(\pi^*|C, V_C)] \cdot 1 + \sum_{\pi\in\Pi\setminus\{\pi^*\}}P(\pi|C, V_C) \cdot 1 \right) \\
    = &\ \frac{1}{2} \left( [1-P(\pi^*|C, V_C)] + [1 - P(\pi^*|C, V_C)] \right)\quad\quad\triangleright P(\pi^*|C, V_C)+ \sum_{\pi\in\Pi\setminus\{\pi^*\}}P(\pi|C, V_C) = 1\\
    = &\ 1-P(\pi^*|C, V_C)
\end{align*}

Similarly, for the naive ICRL policy $\pi_C(\cdot|C)$ we can also have
\begin{equation}
    \sup_{C}D_{TV}(\pi^*_C(\cdot|C) \Vert \pi_C(\cdot|C)) \le 1-P(\pi^*|C) + k
\end{equation}

\end{proof}

Now we consider the posterior term $P(\pi|C)$. We will prove that $P(\pi^*|C, V_C)$ is larger than $P(\pi^*|C)$, then the upper bound of $D_{TV}(\pi^*_C(\cdot|C) \Vert \pi_C(\cdot|C, V_C))$ is tighter than the upper bound of $D_{TV}(\pi^*_C(\cdot|C) \Vert \pi_C(\cdot|C, V_C))$.

\begin{lemma}[Comparison of two upper bounds] 
\label{lemma:2}
Let $\pi^*$ be the estimated optimal policy of $\pi_C^*$ in training dataset, we have
\begin{equation}
    \frac{P(\pi^*|C, V_C)}{P(\pi^*|C)} \ge 1 + \delta_{\text{rel}}
\end{equation}
where
\begin{equation}
    \delta_{\text{rel}} = \frac{\left(\sum_{i \neq *} P(C|\pi_i)\right) \left( e^{\beta(d_J - 2d^*)} - 1 \right)}{P(C|\pi^*) e^{\beta(d_J - 2d^*)} + \sum_{i \neq *} P(C|\pi_i)} >0
\end{equation}
if $d_J - 2d^* > 0$, which means that we have a enough good estimation of $\pi_C^*$. The clear definitions of $d_J$ and $d^*$ are given in the proof.
\end{lemma}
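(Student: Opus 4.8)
## Proof Plan for Lemma 2

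The plan is to model the posterior over behavior policies as a Bayesian inference problem and exhibit how conditioning on the extra signal $V_C$ reshapes the likelihood in favor of $\pi^*$. First I would write down the posterior via Bayes' rule, $P(\pi_i \mid C) \propto P(C \mid \pi_i) P(\pi_i)$, and correspondingly $P(\pi_i \mid C, V_C) \propto P(C \mid \pi_i) P(V_C \mid \pi_i) P(\pi_i)$, under the natural assumption that $V_C$ is conditionally independent of $C$ given the policy (or that its dependence factors through the policy's return). The key modeling step is to posit a soft likelihood for the value signal of the form $P(V_C \mid \pi_i) \propto \exp(-\beta\, \ell(J(\pi_i), V_C))$ for some temperature $\beta > 0$ and a discrepancy $\ell$; since $V_C = J(\pi_C^*)$ and $\pi^*$ is our in-dataset estimate of $\pi_C^*$, the quantity $d^* := \ell(J(\pi^*), V_C)$ measures estimation quality and should be small, while for any competitor $\pi_i$ with $i \neq *$ the discrepancy $d_i := \ell(J(\pi_i), V_C)$ is comparatively large. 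I would then set $d_J$ to be (a lower bound on) the competitor discrepancies, so that $P(V_C \mid \pi^*)/P(V_C \mid \pi_i) \ge e^{\beta(d_J - 2d^*)}$ after accounting for normalization — the factor of $2$ absorbing the worst-case slack between $d^*$ and the true $\ell(J(\pi_C^*), V_C)$ via a triangle-type inequality on $\ell$.

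The second main step is the ratio computation. Writing $Z_C = \sum_i P(C\mid\pi_i)P(\pi_i)$ and $Z_{C,V} = \sum_i P(C\mid\pi_i)P(V_C\mid\pi_i)P(\pi_i)$, and assuming a uniform (or at least policy-independent) prior $P(\pi_i)$ so it cancels, we get
\begin{equation}
\frac{P(\pi^* \mid C, V_C)}{P(\pi^* \mid C)} = \frac{P(V_C\mid\pi^*)\, Z_C}{Z_{C,V}} = \frac{P(V_C\mid\pi^*)\sum_i P(C\mid\pi_i)}{P(C\mid\pi^*)P(V_C\mid\pi^*) + \sum_{i\neq *}P(C\mid\pi_i)P(V_C\mid\pi_i)}.
\end{equation}
Using the likelihood-ratio lower bound $P(V_C\mid\pi^*) \ge e^{\beta(d_J-2d^*)} P(V_C\mid\pi_i)$ for each $i\neq *$, I would divide numerator and denominator by the common factor $P(V_C\mid\pi^*)$ (equivalently, rescale all competitor likelihoods down by $e^{-\beta(d_J-2d^*)}$), which yields exactly
\begin{equation}
\frac{P(\pi^* \mid C, V_C)}{P(\pi^* \mid C)} \ge \frac{P(C\mid\pi^*)e^{\beta(d_J-2d^*)} + \sum_{i\neq *}P(C\mid\pi_i)e^{\beta(d_J-2d^*)}}{P(C\mid\pi^*)e^{\beta(d_J-2d^*)} + \sum_{i\neq *}P(C\mid\pi_i)} = 1 + \delta_{\text{rel}},
\end{equation}
after which a line of algebra identifies $\delta_{\text{rel}}$ with the stated expression. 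Positivity of $\delta_{\text{rel}}$ is immediate once $d_J - 2d^* > 0$, since then $e^{\beta(d_J-2d^*)} - 1 > 0$ and every other term is a nonnegative probability mass (and $\sum_{i\neq *}P(C\mid\pi_i) > 0$ as long as the competitor set is nonempty).

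The main obstacle I anticipate is not the algebra but pinning down the modeling assumptions cleanly enough that the constants $\beta$, $d_J$, and $d^*$ are well-defined and the conditional-independence structure $V_C \perp C \mid \pi$ is defensible in the ICRL setting — in particular, justifying the ``factor of $2d^*$'' slack requires a careful statement of how the in-dataset estimate $\pi^*$ relates to the true context-optimal policy $\pi_C^*$ and how their returns differ (this is where Theorem~\ref{theorem:2}, bounding $|J(\pi_C^*) - J(\pi^*)|$ in terms of a TV divergence, would be invoked to control $|d^* - \ell(J(\pi_C^*),V_C)|$). A secondary subtlety is ensuring the bound is uniform in $C$ as the statement implicitly requires; this should follow by taking $d_J$ to be the infimum over $C$ of the competitor discrepancy gap, with the hypothesis ``$d_J - 2d^* > 0$'' then being precisely the condition that the estimator $\pi^*$ is uniformly good enough to separate $\pi^*$ from all alternatives under the value signal. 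Once these are fixed, the remainder is the elementary fraction manipulation sketched above, and combining with Lemma~\ref{lemma:1} gives $\epsilon_V < \epsilon_{\text{base}}$, completing the chain to Theorem~\ref{theorem:1}.
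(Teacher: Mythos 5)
Your proposal follows essentially the same route as the paper's proof: Bayes' rule with a uniform prior and the conditional independence $P(C\mid\pi^*,V_C)=P(C\mid\pi^*)$, an exponential (Laplace/Boltzmann) likelihood model $P(V_C\mid\pi)\propto\exp(-\beta\,|V_C-J(\pi)|)$, a triangle inequality producing the uniform likelihood-ratio bound $\Gamma_i\ge e^{\beta(d_J-2d^*)}$, and the same final fraction manipulation yielding $\delta_{\text{rel}}$. The only cosmetic difference is that the paper defines $d_J=\min_{i\neq *}\{J(\pi^*)-J(\pi_i)\}$ explicitly rather than as an abstract lower bound on competitor discrepancies, and it does not invoke Theorem~\ref{theorem:2} to justify $d^*$; otherwise the argument matches.
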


\begin{proof}

Using the Bayes Rule, we have 
\begin{equation}
\label{eq:l2_1}
    P(\pi|C) = \frac{P(C|\pi)P(\pi)}{\sum_iP(C|\pi_i)P(\pi_i)}
\end{equation}

For the assumption of training dataset, each policy $\pi$ has same prior $P(\pi) = \delta$. Thus $P(\pi|C) \propto P(C|\pi)$, where $P(C|\pi)$ is the likelihood of dataset.

Similarly, $\pi_C(\cdot|C, V_C)$ also
\begin{align}
\label{eq:l2_2}
    P(\pi|C, V_C) = &\ \frac{P(C,V_C|\pi)P(\pi)}{\sum_i P(C,V_C|\pi_i)P(\pi_i)} \\
    = &\ \frac{P(C|\pi,V_C)P(V_C|\pi)}{\sum_iP(C|\pi_i,V_C)P(V_C|\pi_i)}
\end{align}

For a given $\pi^*$, the generation process of $C$ is independent of $V_C$, thus $P(C|\pi^*, V_C) = P(C|\pi^*)$. Then from \Cref{eq:l2_1,eq:l2_2}, we have

\begin{equation}
    \frac{P(\pi^*|C, V_C)}{P(\pi^*|C)} = \frac{\frac{P(C|\pi^*)P(V_C|\pi^*)}{\sum_i P(C|\pi_i)P(V_C|\pi_i)}}{\frac{P(C|\pi^*)}{\sum_j P(C|\pi_j)}} = \frac{P(V_C|\pi^*) \sum_j P(C|\pi_j)}{\sum_i P(C|\pi_i)P(V_C|\pi_i)}
\end{equation}

Here, we consider the relationship between $V_C$ and $\pi_C^*$. We model $P(V_C|\pi^*)$ as a function of the difference in values. The true value of a policy $\pi$ is $J(\pi)$. The observed empirical value is $V_C = J(\pi_C^*)$. The closer these two values are, the more likely it is that $\pi$ is the source of this observation. A common and effective model takes the form of exponential decay, similar to a Laplace or Boltzmann distribution:

\begin{equation}
    P(V_C|\pi) \propto \exp\left(-\beta |V_C - J(\pi)|\right) = \exp\left(-\beta |J(\pi_C^*) - J(\pi)|\right)
\end{equation}

Before analysis $P(V_C|\pi)$, we label the difference between the true value of the estimated optimal policy $\pi_C^*$ and the true optimal policy $\pi^*$ as 
\begin{equation}
    d^* = |J(\pi_C^*) - J(\pi^*)|
\end{equation}

And the gap between the true optimal policy and the best sub-optimal policy.
\begin{equation}
    d_J = \min_{i \neq *} \{ J(\pi^*) - J(\pi_i) \}
\end{equation}

A better training set will result in a smaller $d^*$, while a larger $d_J$ means there's a obvious gap between the estimated optimal policy and the sub-optimal one.

Then we define a ratio $\Gamma_i$ to compare the value-based likelihoods:

\begin{equation}
    \Gamma_i = \frac{P(V_C|\pi^*)}{P(V_C|\pi_i)} = \frac{\exp(-\beta |J(\pi_C^*) - J(\pi^*)|)}{\exp(-\beta |J(\pi_C^*) - J(\pi_i)|)} = \exp\left(\beta \left(|J(\pi_C^*) - J(\pi_i)| - d^*\right)\right)
\end{equation}

\begin{equation}
    P(\widehat{V_C}|\pi^*) = \exp(-\beta |\widehat{V_C} - J(\pi^*)|)
\end{equation}

Using the triangle inequality, $|J(\pi_C^*) - J(\pi_i)| \ge |J(\pi^*) - J(\pi_i)| - |J(\pi_C^*) - J(\pi^*)| \ge d_J - d^*$. Thus, we can establish a uniform lower bound $\Gamma_{\min}$:
\begin{equation}
    \Gamma_i \ge \exp\left(\beta (d_J - 2d^*)\right) \triangleq \Gamma_{\min}
\end{equation}

This assumes $d_J > 2d^*$, which implies $\Gamma_{\min} > 1$.

Now, we can bound the posterior ratio:
\begin{equation}
    \frac{P(\pi^*|C, V_C)}{P(\pi^*|C)} = \frac{P(V_C|\pi^*) \sum_j P(C|\pi_j)}{P(C|\pi^*)P(V_C|\pi^*) + \sum_{i \neq *} P(C|\pi_i)\frac{P(V_C|\pi^*)}{\Gamma_i}} \ge \frac{\sum_j P(C|\pi_j)}{P(C|\pi^*) + \sum_{i \neq *} \frac{P(C|\pi_i)}{\Gamma_{\min}}}
\end{equation}

For convenience, we label the terms related to training dataset (likelihood) $S_* = P(C|\pi^*)$ and $S_{\text{other}} = \sum_{i \neq *} P(C|\pi_i)$, then we have

\begin{equation}
    \frac{P(\pi^*|C, V_C)}{P(\pi^*|C)} \ge \frac{S_* + S_{\text{other}}}{S_* + S_{\text{other}}/\Gamma_{\min}}
\end{equation}

Obviously, this ratio is larger than 1, we define the relative improvement, $\delta_{\text{rel}}$ as

\begin{equation}
    \delta_{\text{rel}} = \frac{S_* + S_{\text{other}}}{S_* + S_{\text{other}}/\Gamma_{\min}} - 1 
    = \frac{S_{\text{other}}(1 - 1/\Gamma_{\min})}{S_* + S_{\text{other}}/\Gamma_{\min}}
    = \frac{S_{\text{other}}(\Gamma_{\min} - 1)}{S_* \Gamma_{\min} + S_{\text{other}}}
\end{equation}

Thus we have
\begin{equation}
    \frac{P(\pi^*|C, V_C)}{P(\pi^*|C)} \ge 1 + \delta_{\text{rel}}
\end{equation}
    
\end{proof}

\textbf{Final proof of Theorem 1.}
\begin{proof}
Let 
\begin{align*}
    \epsilon_{\text{base}} &= \sup_{C}D_{TV}(\pi^*_C(\cdot|C) \Vert \pi_C(\cdot|C))\\\epsilon_{V} &= \sup_{C}D_{TV}(\pi^*_C(\cdot|C) \Vert \pi_C(\cdot|C, V_C))
\end{align*}
We have
\begin{align*}
&\ \sup_{C} |J(\pi^C) - J(\pi_C(\cdot|C))|\\ \le&\ \frac{2r_{\text{max}}}{(1-\gamma)^2}\epsilon_{\text{base}} = \frac{2r_{\text{max}}}{(1-\gamma)^2}\sup_{C}D_{TV}(\pi^*_C(\cdot|C) \Vert \pi_C(\cdot|C)) \\ \le&\ \frac{2r_{\text{max}}}{(1-\gamma)^2}(1-P(\pi^*|C) + k ) = D_{\text{base}}
\end{align*}
Simialrly, 
\begin{equation*}
    \sup_{C} |J(\pi^C) - J(\pi_C(\cdot|C, V_C))| \le \frac{2r_{\text{max}}}{(1-\gamma)^2}(1-P(\pi^*|C, V_C) + k ) = D_V
\end{equation*}
For \Cref{lemma:2}, we have $P(\pi^*|C, V_C)>P(\pi^*|C)$, thus $1-P(\pi^*|C, V_C) < 1-P(\pi^*|C)$, thus $D_V < D_{\text{base}}$.
\end{proof}

\subsection{Proof of Corollary 2}
\label{appendix:a2}

\begin{proof}
For the proofs of \Cref{theorem:2} and \Cref{lemma:1} don't use the property of $V_C$ (i.e. $V_C = \pi_C^*$), we only need to replace all the $V_C$ by $\widehat{V_C}$ in these conclusions. Now we consider the \Cref{lemma:2}. Here, there is an estimation error between $\widehat{V_C}$ and $V_C$.
\begin{equation}
    d_V = |V_C - \widehat{V_C}| = |J(\pi_C^*) - \widehat{V_C}|
\end{equation}
Then we have
\begin{align*}
    |\widehat{V_C} - J(\pi_i)| &= |(J(\pi_C^*) - J(\pi_i)) - (J(\pi_C^*) - \widehat{V_C})| \\
    &\ge |J(\pi_C^*) - J(\pi_i)| - |J(\pi_C^*) - \widehat{V_C}| \\
    &= |J(\pi_C^*) - J(\pi_i)| - d_V
\end{align*}
\begin{align*}
    |\widehat{V_C} - J(\pi^*)| &= |(\widehat{V_C} - J(\pi_C^*)) + (J(\pi_C^*) - J(\pi^*))| \\
    &\le |\widehat{V_C} - J(\pi_C^*)| + |J(\pi_C^*) - J(\pi^*)| \\
    &= d_V + |J(\pi_C^*) - J(\pi^*)|
\end{align*}
Then the ratio of likelihoods becomes
\begin{align*}
    \Gamma_i = \frac{P(\widehat{V_C}|\pi^*)}{P(\widehat{V_C}|\pi_i)} & \ge \exp\left(\beta \left( \left( |J(\pi_C^*) - J(\pi_i)| - d_V \right) - \left( d_V + |J(\pi_C^*) - J(\pi^*)| \right) \right)\right) \\
    & = \exp\left(\beta \left( |J(\pi_C^*) - J(\pi_i)| - |J(\pi_C^*) - J(\pi^*)| - 2d_V \right)\right)
\end{align*}
And for the conclusion in \Cref{lemma:2}, we finally have
\begin{equation}
    \Gamma_i \ge \exp\left(\beta (d_J - 2d^* - 2d_V)\right) \triangleq \Gamma_{\min}
\end{equation}
Thus the conclusion of \Cref{theorem:1} still holds if the estimation error holds that
\begin{equation}
    d_J - 2d^* - 2d_V > 0
\end{equation}
    
\end{proof}

\section{A Further Understanding of Contextual Ambiguity}
\label{appendix:context_ambiguity}
Why does Contextual Ambiguity lead to a vicious cycle of performance degradation? The root cause is that the ICRL training process does not require the model to analyze context for decision-making. Instead, it encourages pattern matching. This tendency is amplified because short-term information is often more decisive for immediate actions than long-term history. As a result, the model places more weight on recent interactions. This recency bias explains why a few poor samples in the short-term context can mislead the model, making it believe it is at an earlier stage, triggering performance collapse. This is further supported by our Transformer attention heatmaps.

Due to the long horizon (i.e., 400 time steps), the attention weights are relatively small, making the visualization less pronounced. However, despite this, we can observe that for the output at time \( t \), inputs close to \( t \) have higher attention weights (brighter colors), suggesting that short-term information is more decisive for decision-making than long-term history.

This observation highlights why Contextual Ambiguity has such a significant impact: recent low-reward trajectories, particularly those sampled poorly, have a stronger influence on the model’s decisions. It offers valuable insights into why context ambiguity leads to performance degradation and suggests potential directions for mitigating this effect in future work.

\begin{figure}[h]
    \centering
    \includegraphics[width=0.7\linewidth]{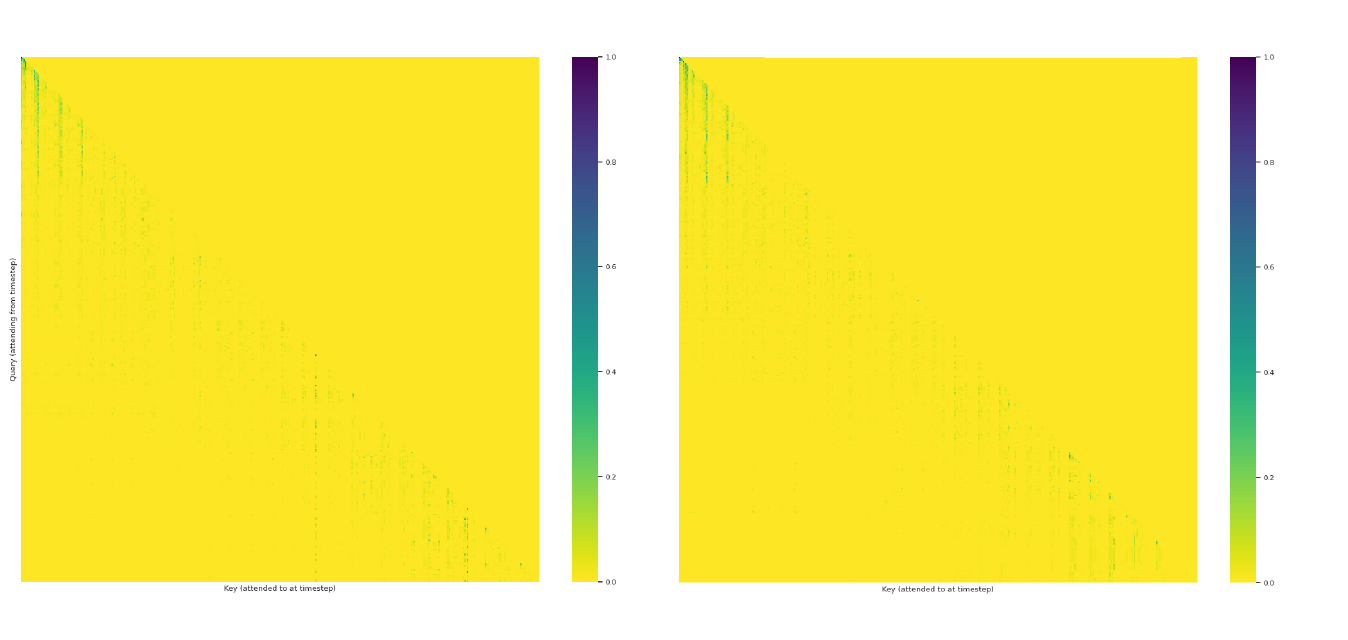}
    \caption{This heatmap represents the attention weights in the final layer of the Transformer model. Each position in the output is influenced by different positions in the input, with the heatmap visualizing the attention weights that indicate the degree of influence. Specifically, for each time step $t$ in the output sequence, the heatmap shows how much each position in the input sequence contributes to the output at that time step.}
    \label{fig:heatmap}
\end{figure}

\section{Practical Algorithms}\ \\
\label{appendix:algo}

\begin{algorithm}[h]
\caption{Collecting training dataset.}
\label{alg:collect_dataset}
\KwIn{Policies $\{\pi_1, \pi_2, \dots,\pi_N\}_\tau$ from online RL algorithm training process for each task $\tau$.}
\KwOut{Training dataset $\mathcal{D}$.}

$\mathcal{D}\leftarrow \emptyset$\\

\For{$\tau \in \mathcal{T}_{\text{train}}$}{
\For{$i$ from 1 to N}{
Evaluating each policy $\pi_{i,\tau}$ and get $J(\pi_{i,\tau})$\\
}
\While{\textnormal{not reach the max number}}{
\For{$i$ \textnormal{from} 1 \textnormal{to} N}{
Sampling in-context trajectories $h$ of length $H$, $h^{(\tau)}_i \leftarrow (s_0, a_0, r_0, \dots, s_{H-1}, a_{H-1}, r_{H-1}, s_H, a_H, r_H)$. \\
Adding $\widehat{V_C} = J(\pi_{i,\tau})$ at each timestep, $h^{(\tau)}_i \leftarrow(s_0, \widehat{V_C},  a_0, r_0, \dots, s_{H-1}, \widehat{V_C}, a_{H-1}, r_{H-1}, s_H, \widehat{V_C}, a_H, r_H)$
}
$h^{(\tau)}\leftarrow \{h^{(\tau)}_1,h^{(\tau)}_2,\dots,h^{(\tau)}_H\}$ \\
$\mathcal{D}\leftarrow \mathcal{D}\cup h^{(\tau)}$
}

}

\end{algorithm}

\begin{algorithm}[h]
\caption{CV-ICRL-$\phi(C)$.}
\label{alg:estimate_through_C}
\KwIn{Training dataset $\mathcal{D}$.}
\KwOut{Trained ICRL policy $\pi^\text{ICRL}$.}
Initialize the parameters of $\pi^\text{ICRL}$ with $\theta$ and Context Value estimate model $\phi(V)$ with $\theta_V$. 

\While{\textnormal{not converged}}{
    Randomly sample context \( C = (s_0, \widehat{V_C}, a_0, r_0, \dots, s_{H-1}, \widehat{V_C}, a_{H-1}, r_{H-1}, s_H, \widehat{V_C}, a_H, r_H) \). \\
    Compute the context \( C_t \) for each timestep \( t \) in the trajectory. \\
    Update \( \pi^\text{ICRL} \) with policy gradient based on the error between predicted and actual returns. \\
    \textbf{Update rule:} Use cross-entropy loss to minimize the difference between the predicted return and the true return from \( \pi^\text{ICRL} \) policy. \\
    Update \( \phi(V) \) (Context Value model) with Mean Squared Error (MSE) loss, based on the predicted Context Value \( \widehat{V_C} \). \\
    \textbf{Update rule:} Minimize the MSE between \( \phi(C) \) and \( \widehat{V_C} \). \\
}
\textbf{Testing time:} \\
Reset environment and get init state $s_0$.\\
Initialize context $C\leftarrow (s_0)$.\\
\For{t = 0 \dots T}{
    \quad Compute \( \widehat{V_C} = \phi(C) \) using the trained Context Value model. \\
    \quad Predict the action $a_t$ using the trained policy \( \pi^\text{ICRL}(C) \). \\
    \quad Execute $a_t$ in the environment and observe the next state $s_{t+1}$ and reward $r_t$. \\
    \quad Update the context \( C \) with $(\widehat{V_C}, a_t,r_t,s_{t+1})$. \\
}
\end{algorithm}

\begin{algorithm}[h]
\caption{CV-ICRL-$\phi(t)$}
\label{alg:estimate_through_t}
\KwIn{Training dataset $\mathcal{D}$. Context value estimator $\phi(t)$}
\KwOut{Trained ICRL policy $\pi^\text{ICRL}$.}
Initialize the parameters of $\pi^\text{ICRL}$ with $\theta$. 

\While{\textnormal{not converged}}{
    Randomly sample context \( C = (s_0, \widehat{V_C}, a_0, r_0, \dots, s_{H-1}, \widehat{V_C}, a_{H-1}, r_{H-1}, s_H, \widehat{V_C}, a_H, r_H) \). \\
    Compute the context \( C_t \) for each timestep \( t \) in the trajectory. \\
    Update \( \pi^\text{ICRL} \) with policy gradient based on the error between predicted and actual returns. \\
    \textbf{Update rule:} Use cross-entropy loss to minimize the difference between the predicted return and the true return from \( \pi^\text{ICRL} \) policy. \\
}
\textbf{Testing time:} \\
Reset environment and get init state $s_0$.\\
Initialize context $C\leftarrow (s_0)$.\\
\For{t = 0 \dots T}{
    \quad Compute \( \widehat{V_C} = \phi(t) \) using the given Context Value estimator. \\
    \quad Predict the action $a_t$ using the trained policy \( \pi^\text{ICRL}(C) \). \\
    \quad Execute $a_t$ in the environment and observe the next state $s_{t+1}$ and reward $r_t$. \\
    \quad Update the context \( C \) with $(\widehat{V_C}, a_t,r_t,s_{t+1})$. \\
}
\end{algorithm}

\clearpage
\section{Details of Experiments}\ \\
\subsection{Enviromnets}
\label{Appendix:env}
\subsubsection{Dark Room}
The Dark Room environment is a challenging testbed for an agent's ability to perform efficient exploration under conditions of extreme reward sparsity and partial observability. The task places an agent in a large gridworld with a single goal state, but the agent's perception is limited to its local vicinity. A positive reward is only granted upon reaching the goal, meaning the agent must conduct a systematic, memory-based search to explore the space without any guiding signals. It is therefore highly effective at evaluating an agent's capacity to use memory for long-term navigation and exploration.

\subsubsection{Minigrid}

The environments commonly employed in current methods, such as Dark Room and MiniWorld, are relatively simple. For instance, the state space in Dark Room is merely two-dimensional. More importantly, the task variations in these settings are largely confined to changes in the reward function or minor shifts in dynamics. This is insufficient to demonstrate the generalization capability of an ICRL algorithm to unseen tasks. To address this, we use Minigrid, a more complex and diverse environment, to demonstrate that our ICRL algorithm can indeed generalize to completely new types of tasks.

Minigrid environment features a collection of gridworld scenarios where an agent must infer and accomplish a goal through exploration. Tasks range from simple navigation, such as bypassing obstacles to reach a goal, to complex sequential decision-making, like opening a series of doors. Critically, Minigrid provides a fixed-size (7x7), uniform symbolic observation across all its diverse tasks. This standardization removes the challenge of unifying observation representations, making it an ideal platform to directly test how effectively an In-Context Reinforcement Learning (ICRL) policy can adapt to new tasks.

Belows are the introductions of 6 task types used for both training and testing.

\paragraph{LavaCrossingS9N3.} In this task type, the agent must navigate through a room with deadly lava streams running horizontally or vertically. The agent must reach the green goal square while avoiding lava, with a single safe crossing point for each stream. A path to the goal is guaranteed.

\paragraph{LavaCrossingS9N2.} Similar to LavaCrossingS9N3, but the challenge is slightly less difficult compared to LavaCrossingS9N3.

\paragraph{SimpleCrossingS9N3.} In this task type, the agent must navigate through a room with walls instead of lava. The objective is to reach the green goal square while avoiding walls. This task is easier compared to the LavaCrossing tasks and is useful for quickly testing algorithms.

\paragraph{SimpleCrossingS11N5.} Similar to SimpleCrossingS9N3, the agent must reach the green goal square while avoiding walls. The task involves a larger environment with more walls, making it moderately more challenging than SimpleCrossingS9N3 but still easier than the LavaCrossing tasks.

\paragraph{BlockedUnlockPickup.} In this task type, the agent must pick up an object placed in another room, behind a locked door. The door is blocked by a ball that the agent must first move to unlock the door. The agent must learn to move the ball, pick up the key, open the door, and then pick up the object in the other room.

\paragraph{Unlock.} This task type is a simplified version of BlockedUnlockPickup. The agent just need to pickup the kay and then unlock the door.

\begin{figure}[h]
    \centering
    \includegraphics[width=0.8\linewidth]{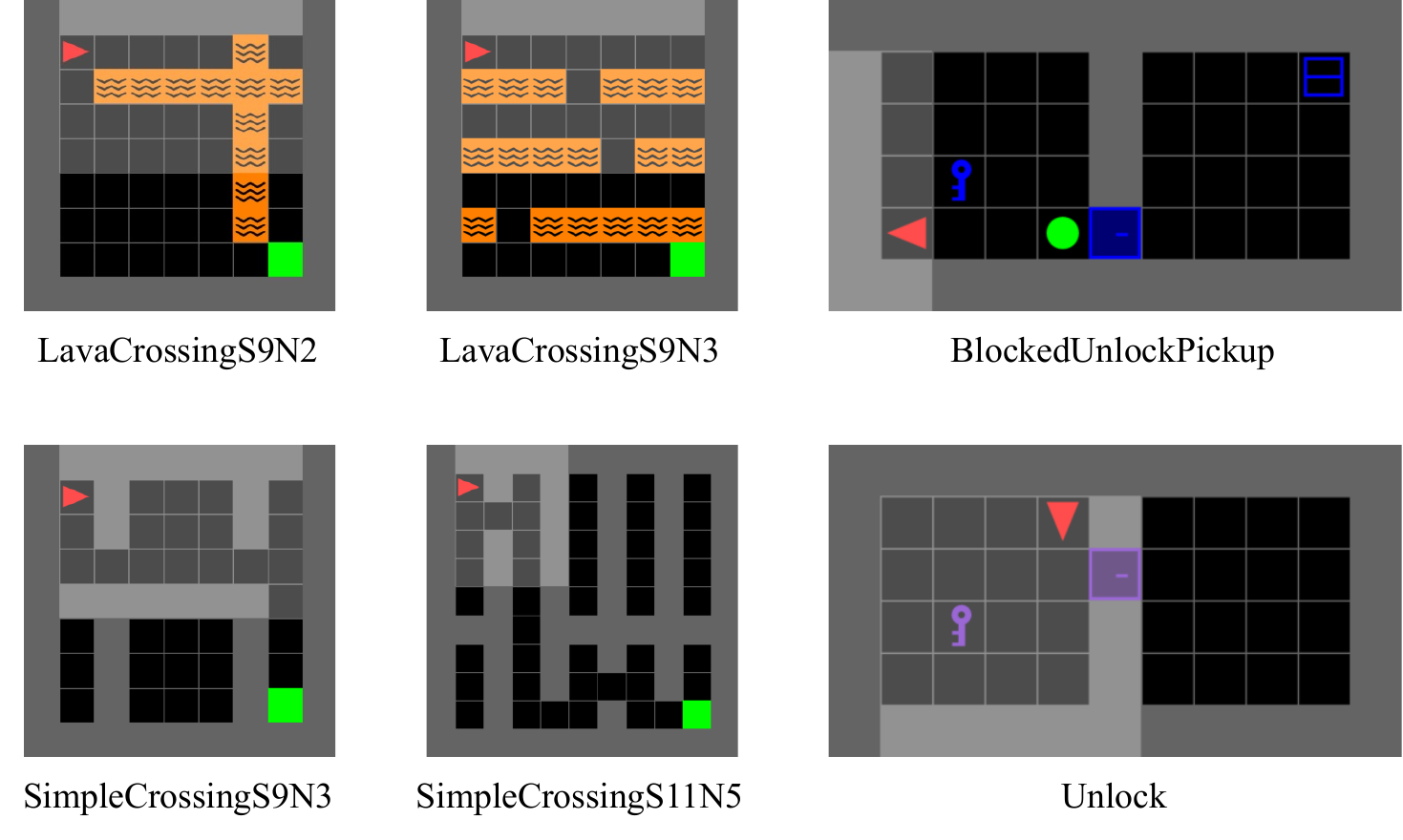}
    \caption{The 6 tasks for the main experiments, with different seeds corresponding to different layouts.}
    \label{fig:placeholder}
\end{figure}

Belows are the introductions of 4 task types used for the cross-task-type generalization experiments, and only for testing.

\paragraph{LavaCrossingS11N5.} Similar to LavaCrossingS9N3, LavaCrossingS11N5 introduces greater difficulty by increasing the number of lava streams and the complexity of the room layout. This makes it more challenging for the agent to find a safe path to the goal.

\paragraph{FourRooms.} Agent must navigate a maze composed of four rooms, interconnected by gaps in the walls. The agent's goal is to reach the green goal square to receive a reward. Both the agent and the goal square are randomly placed in any of the four rooms.

\paragraph{DoorKey-6x6.} In this task type, the agent must pick up a key to unlock a door and then reach the green goal square. 

\paragraph{DoorKey-8x8.} Similar to DoorKey-6x6, but is more complex due to the larger environment size.

\begin{figure}[h]
    \centering
    \includegraphics[width=0.4\linewidth]{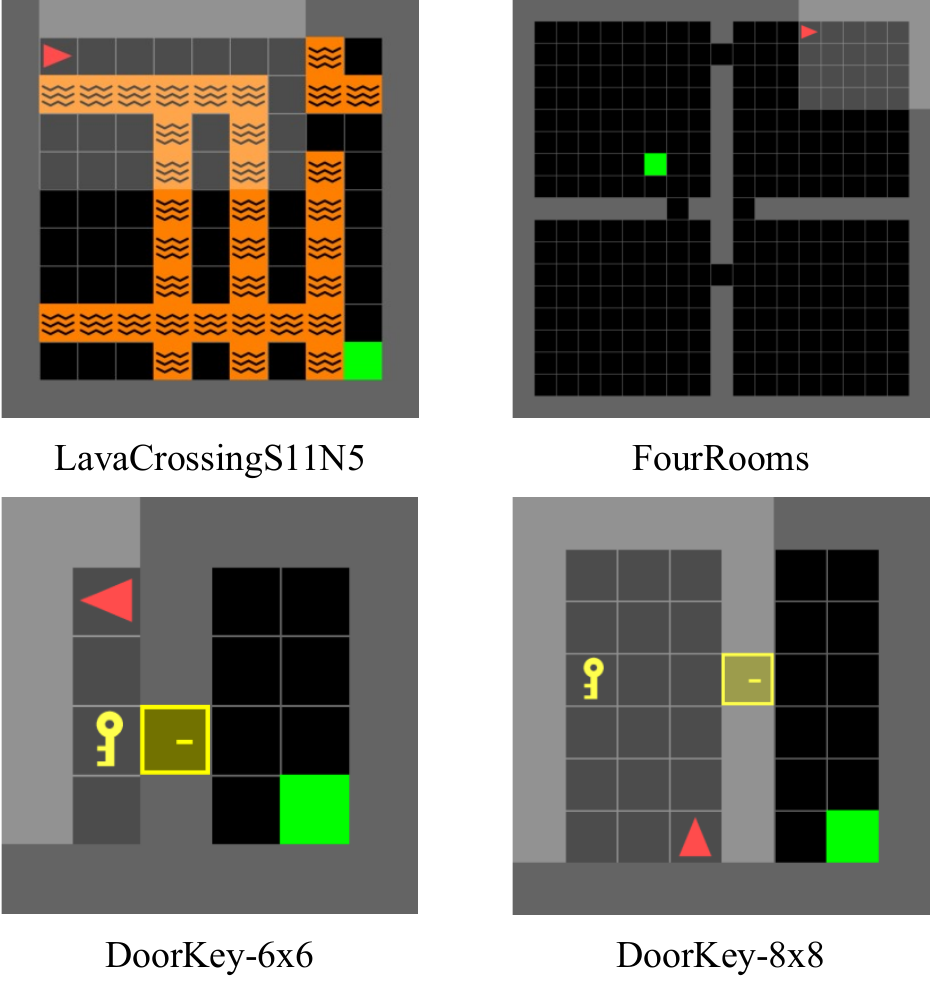}
    \caption{The 4 tasks for the cross-task generalization experiments, with different seeds corresponding to different layouts.}
    \label{fig:placeholder}
\end{figure}

\subsection{Preparations of Training Datasets}
\label{Appendix:pretrain}
For the Dark Room environment, we train PPO in the same way as Algorithm Distillation. We choose 20 different environments, each with different goal positions (x, y), and use these to train the PPO agent.

For Minigrid, the environment is more challenging than Dark Room. The PPO policy faces instability when trained from scratch for each layout. To address this, we adopt a pretrain-finetune strategy: we first pretrain a PPO model on a variety of layouts and then finetune it on specific layouts as needed.

The PPO algorithm we use is based on the implementation provided by Stable-Baselines3, a popular library for reinforcement learning algorithms. This implementation ensures stability and efficiency during training. Below is a table summarizing the hyperparameters used for training our PPO model in both environments. The values not mentioned are set to the default values from Stable-Baselines3.

\begin{table}[h!]
\centering
\caption{Hyperparameters of PPO training for Dark Room}
\label{tab:ppo_hyperparameters}
\begin{tabular}{c|c}
\toprule
\textbf{Hyperparameter} & \textbf{Value} \\
\midrule
Learning Rate & 0.0001 \\
Policy Network Architecture & MlpPolicy \\
Training Steps & 1000000 \\
Save Frequency & 50000\\
N Steps & 500\\
\bottomrule
\end{tabular}
\end{table}

\begin{table}[h!]
\centering
\caption{Hyperparameters of PPO training for Minigrid Pre-train}
\label{tab:ppo_hyperparameters}
\begin{tabular}{c|c}
\toprule
\textbf{Hyperparameter} & \textbf{Value} \\
\midrule
Learning Rate & 0.0002 \\
Number of Epochs & 20 \\
Policy Network Architecture & CustomCNN \\
Training Steps & 2e7 \\
Save Frequency & 64000\\
N Steps & 1600\\
\bottomrule
\end{tabular}
\end{table}

\begin{table}[h!]
\centering
\caption{Hyperparameters of PPO training for Minigrid Fine-tune}
\label{tab:ppo_hyperparameters}
\begin{tabular}{c|c}
\toprule
\textbf{Hyperparameter} & \textbf{Value} \\
\midrule
Learning Rate & 2e-5 \\
Policy Network Architecture & CustomCNN \\
Training Steps & 400000 \\
Save Frequency & 16000\\
N Steps & 1600\\
\bottomrule
\end{tabular}
\end{table}

CustomCNN is a customized 2-layer convolutional neural network (CNN) feature extractor designed to process image inputs.

For the BlockedUnlockPickup task, due to its extreme difficulty, the model was unable to learn effectively from scratch. To address this, we used a pre-trained model from the UnlockPickup task and continued pretraining it on the BlockedUnlockPickup task.

Now, regarding the data collection process, for both Dark Room and Minigrid, we selected a series of models and arranged them to reflect their training progression: the earlier models are those that showed continuous performance improvement, while the later ones are those that had reached convergence. This arrangement was made with the intention that the ICRL policy learned from these models would also achieve stability after performance convergence. For each PPO process, we selected 40 models to sample data from a horizon of 400. For Dark Room, this process resulted in a total of 40,000 trajectories. For each task in Minigrid, we collected a total of 170,000 trajectories.

\subsection{Details of Implementations}
\label{appendix:imple}

\textbf{AD (Algorithm Distillation):} AD~\citep{AD} is trained on continuously improving trajectories generated from the agent's experience. In this framework, the model learns to predict the next action based on a history of states, actions, and rewards, effectively transforming reinforcement learning into a supervised learning problem.

\textbf{AD$^\epsilon$}: AD$^\epsilon$~\citep{ADeps} builds upon AD by replacing real online trajectories with simulated trajectories. These simulated trajectories are generated by sampling from a noised model, with the noise progressively reduced during training. This approach allows for greater flexibility in training the model by not relying strictly on real data. It also makes the model more robust to imperfections in the trajectory data.

\textbf{IDT (In-Context Decision Transformer):} IDT~\citep{IDT} extends the AD framework by reordering the context according to episode rewards and introducing a hierarchical decision-making structure. This hierarchical approach enables the model to handle longer horizons and more complex tasks. Specifically, IDT organizes the contexts into different decision levels, with the high-level model focusing on broader task goals, while the decision model focuses on more immediate decisions. This method allows for better handling of tasks with long temporal dependencies.

We implement all baselines as well as our method on a GPT-2~\citep{gpt2} based backbone, ensuring comparable parameter scales and closely matched architectural hyperparameters.

Due to the context length limitation of our GPT-2 model, we set the context length to 400, while GPT-2 has a maximum horizon of 1024. This constraint leads to a situation where each position in the Transformer corresponds to a time step's context, rather than having continuous three or four positions in the Minigrid environment (with a 7x7x3 observation) corresponding to one timestep's context. For this, we use 2 convolutional layers followed by 1 MLP layer, embedding the observations into 64-dimensional vectors. Actions are one-hot encoded into a 7-dimensional vector, while rewards and Context Values are represented as single-dimensional values.

For AD$^\epsilon$, the model sequence length is 40. For each position \( i \) in the model sequence, we set \( \epsilon \) as a function: 
$\epsilon_i = \min\left(\frac{i}{30}, 1\right)$,
which ensures consistency with AD and other algorithms in the Minigrid setting. The first 30 models correspond to trajectories with continuously improving performance, while the remaining 10 represent near-stable (optimal) models.

For IDT, we implement the hierarchical structure as outlined in the original paper. The structure includes three models: a decision model, a high-level decision model, and a reviewing decisions model. The high-level decision model's context timestep interval is set to 5, and both the decision model and high-level decision model share the same architecture and hyperparameters. The reviewing decisions model is a 2-layer MLP. The embedding size for the high-level decision model is set to 64.

For CV-ICRL, in Dark Room, we estimate the Context Value using the normalized average episode reward of the source policy. We consider the maximum average episode reward (AER) from the model sequence as 1. In Minigrid, we use the average episode reward of the source policy as an estimate for \( V_C \), as the max AER for these tasks is set to 1. To evaluate the source policy’s average episode reward, we average the results from 5 seeds for each source policy. 

For CV-ICRL-$\phi(C)$, we add an output head parallel to the action prediction head, predicting \( V_C \) with a size of 1. 

For CV-ICRL-$\phi(t)$, in Dark Room, we use the function \( V_C = \min\left(\frac{t}{1200}, 1\right) \). For Dark Room, we tested three different selection strategies and ultimately chose this function. For the BlockedUnlockPickup, LavaCrossingS9N2, SimpleCrossingS9N3, SimpleCrossingS11N5, and Unlock tasks, we used \( \min\left(\frac{t}{1000}, 0.95\right) \) for the Context Value function. For LavaCrossingS9N3 and the cross-task generalization experiment, we used \( \min\left(\frac{t}{800}, 0.875\right) \). We performed additional experiments to compare the performance of these three estimated Context Value functions.

\clearpage

\subsection{More Experimental Results}
\label{appendix:more_exp}

More ablation results on another 3 Minigrid tasks.
\begin{figure}[h]
    \centering
    \includegraphics[width=\linewidth]{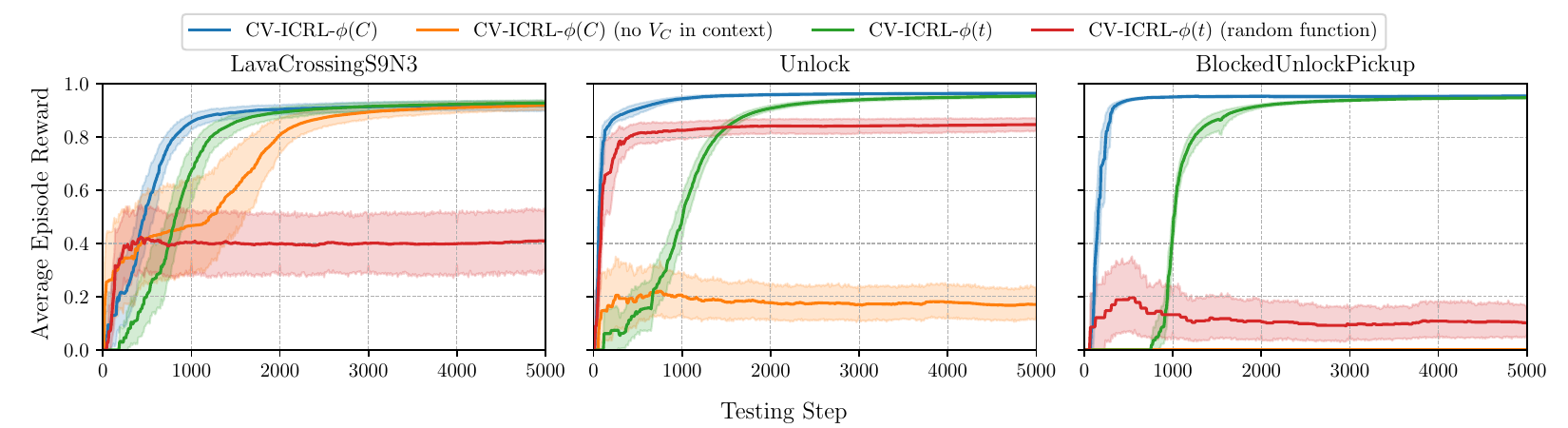}
    \caption{Additional ablation results.}
\end{figure}

Additional experimental result to compare the performance of these three estimated Context Value functions, where $\phi_1 = \min\left(\frac{t}{800}, 0.875\right)$, $\phi_2 = \min\left(\frac{t}{600}, 0.9\right)$, $\phi_3 = \min\left(\frac{t}{1000}, 0.95\right)$

\begin{figure}[h]
    \centering
    \includegraphics[width=\linewidth]{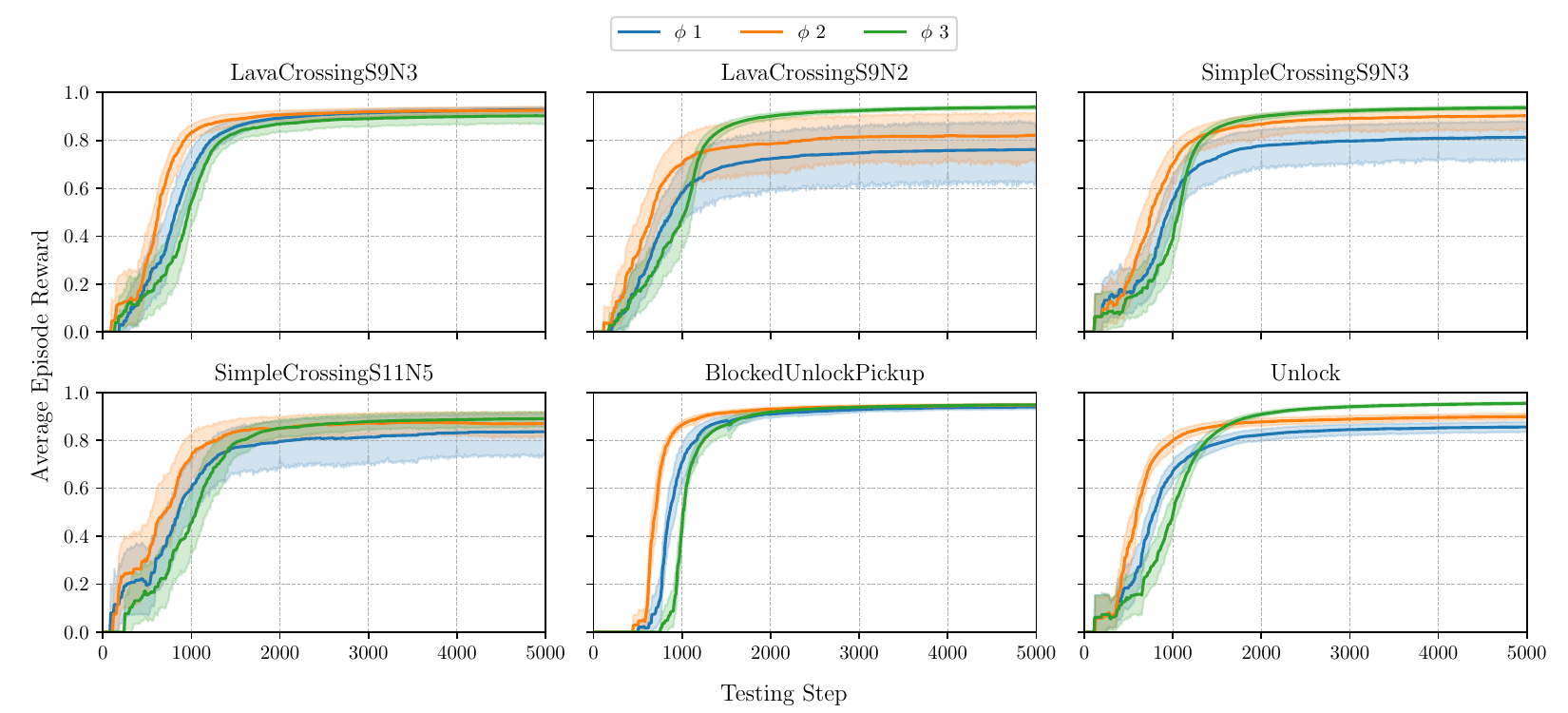}
    \caption{Comparison of $\phi(t)$.}
\end{figure}

\begin{table}[h]
\caption{Addtional results for the experiments on 4 unseen task types. The best results are \textbf{in bold} and the second-best are \underline{underlined}.}
    \vspace{-10pt}
\begin{center}
\resizebox{\textwidth}{!}{%
\begin{tabular}{cc|l>{\columncolor{lightgray!30}}l>{\columncolor{lightgray!30}}l}
\toprule
\bf{Task Type} & \bf{Metric} & \bf{AD} & \bf{CV-ICRL-$\phi(t)$} & \bf{CV-ICRL-$\phi(C)$} \\
\midrule
\multirow{3}{*}{\makecell{LavaCrossingS11N5}}   & AER & $0.614\pm0.329$ & $\underline{0.693}\pm0.263$ & $\bm{0.765\pm0.205}$ \\
                                    & LER & $0.860\pm0.156$ & $\underline{0.907\pm0.106}$ & $\bm{0.919\pm0.064}$ \\
                                    & Degra. Freq. ($\%$) & $37.364\pm33.672$ & $\underline{32.500\pm28.544}$ & $\bm{20.703\pm21.393}$\\
\midrule
\multirow{3}{*}{\makecell{FourRooms}}   & AER & $0.106\pm0.141$ & $\underline{0.219\pm0.200}$ & $\bm{0.277\pm0.231}$ \\
                                    & LER & $\underline{0.315\pm0.262}$ & $\bm{0.422\pm0.225}$ & $0.288\pm0.320$ \\
                                    & Degra. Freq. ($\%$) & $85.716\pm11.331$ & $\underline{77.076\pm17.016}$ & $\bm{76.923\pm18.351}$ \\
\midrule
\multirow{3}{*}{\makecell{DoorKey-6x6}}   & AER & $0.464\pm0.222$ & $\underline{0.589\pm0.236}$ & $\bm{0.745\pm0.202}$ \\
                                    & LER & $0.629\pm0.275$ & $\underline{0.749\pm0.245}$ & $\bm{0.797\pm0.278}$ \\
                                    & Degra. Freq. ($\%$) & $52.659\pm17.003$ & $\underline{48.098\pm21.714}$ & $\bm{27.007\pm19.836}$ \\
\midrule
\multirow{3}{*}{\makecell{DoorKey-8x8}}   & AER & $0.408\pm0.194$ & $\bm{0.660\pm0.221}$ & $\underline{0.618\pm0.306}$ \\
                                    & LER & $0.579\pm0.239$ & $\underline{0.806\pm0.198}$ & $\bm{0.816\pm0.189}$ \\
                                    & Degra. Freq. ($\%$) & $58.258\pm13.199$ & $\bm{38.563\pm17.920}$ & $\underline{40.659\pm28.430}$ \\
\bottomrule
\end{tabular}}
\end{center}
    \vspace{-15pt}
\end{table}

\end{document}